\newif\ifarXiv         
\newif\ifjournal        

\journalfalse \arXivtrue  

\ifjournal 
\documentclass[final,onefignum,onetabnum]{siamart220329}

\usepackage{braket,amsfonts}

\usepackage{array}

\usepackage[caption=false]{subfig}
\captionsetup[subtable]{position=bottom}
\captionsetup[table]{position=bottom}

\usepackage{pgfplots}

\newsiamthm{claim}{Claim}
\newsiamremark{remark}{Remark}
\newsiamremark{hypothesis}{Hypothesis}
\crefname{hypothesis}{Hypothesis}{Hypotheses}

\usepackage{algorithmic}

\usepackage{graphicx,epstopdf}

\Crefname{ALC@unique}{Line}{Lines}

\usepackage{amsopn}

\usepackage{xspace}
\usepackage{bold-extra}
\usepackage[most]{tcolorbox}

\newcounter{example}
\colorlet{texcscolor}{blue!50!black}
\colorlet{texemcolor}{red!70!black}
\colorlet{texpreamble}{red!70!black}
\colorlet{codebackground}{black!25!white!25}


\lstdefinestyle{siamlatex}{%
  style=tcblatex,
  texcsstyle=*\color{texcscolor},
  texcsstyle=[2]\color{texemcolor},
  keywordstyle=[2]\color{texemcolor},
  moretexcs={cref,Cref,maketitle,mathcal,text,headers,email,url},
}

\tcbset{%
  colframe=black!75!white!75,
  coltitle=white,
  colback=codebackground, 
  colbacklower=white, 
  fonttitle=\bfseries,
  arc=0pt,outer arc=0pt,
  top=1pt,bottom=1pt,left=1mm,right=1mm,middle=1mm,boxsep=1mm,
  leftrule=0.3mm,rightrule=0.3mm,toprule=0.3mm,bottomrule=0.3mm,
  listing options={style=siamlatex}
}

\newtcblisting[use counter=example]{example}[2][]{%
  title={Example~\thetcbcounter: #2},#1}

\newtcbinputlisting[use counter=example]{\examplefile}[3][]{%
  title={Example~\thetcbcounter: #2},listing file={#3},#1}

\DeclareTotalTCBox{\code}{ v O{} }
{ 
  fontupper=\ttfamily\color{black},
  nobeforeafter,
  tcbox raise base,
  colback=codebackground,colframe=white,
  top=0pt,bottom=0pt,left=0mm,right=0mm,
  leftrule=0pt,rightrule=0pt,toprule=0mm,bottomrule=0mm,
  boxsep=0.5mm,
  #2}{#1}

\patchcmd\newpage{\vfil}{}{}{}
\flushbottom


\theoremstyle{plain}
\newtheorem{thm}{Theorem}[section]
\newtheorem{conj}[thm]{Conjecture}
\newtheorem{Lem}[thm]{Lemma}
\newtheorem{lem}[thm]{Lemma}
\newtheorem{cor}[thm]{Corollary}
\newtheorem{prop}[thm]{Proposition}
\newtheorem{assumption}{Assumption}
\newtheorem{rem}[thm]{Remark}
\newtheorem{ex}[thm]{Example}
\numberwithin{equation}{section}

\newcommand{\balpha}{\boldsymbol{\alpha}}


\DeclareMathOperator*{\argmin}{arg\,min}
\newcommand{\rbracket}[1]{\left(#1\right)}      
\newcommand{\sbracket}[1]{\left[#1\right]}      
\newcommand{\cbracket}[1]{\left\{#1\right\}}      
\newcommand{\norm}[1]{\left\|#1\right\|}
\newcommand{\abs}[1]{\left|#1\right|}
\newcommand{\innerp}[1]{\langle{#1}\rangle}


\def\mL{\mathcal{L}}
\def\mE{\mathcal{E}}

\def\mN{\mathcal{N}}


\def\R{\mathbb{R}}


\newcommand{\bR}{\mathbf{R}}

\newcommand{\bA}{\mathbf{A}}
\newcommand{\bb}{\mathbf{b}}

\newcommand{\bv}{\mathbf{v}}
\newcommand{\ba}{\mathbf{a}}
\newcommand{\br}{\mathbf{r}}
\newcommand{\bh}{\mathbf{h}}
\newcommand{\bt}{\mathbf{t}}

\newcommand{\bu}{\mathbf{u}}

\newcommand{\bc}{\mathbf{c}}
\newcommand{\bw}{\mathbf{w}}
\newcommand{\bZ}{\mathbf{Z}}
\newcommand{\boldeta}{\boldsymbol{\eta}}
\newcommand{\boldlambda}{\boldsymbol{\lambda}}
\newcommand{\boldalpha}{\boldsymbol{\alpha}}
\newcommand{\boldxi}{\boldsymbol{\xi}}
\newcommand{\boldphi}{\boldsymbol{\varphi}}
\newcommand{\Log}{\text{Log}}


\usepackage[
backend=biber,
style=alphabetic,
isbn=false,
maxbibnames=9,
]{biblatex}
\addbibresource{GLE.bib}
\addbibresource{RKHS_reg.bib}
\addbibresource{velocity_potential.bib}

\DeclareSourcemap{
  \maps[datatype=bibtex, overwrite]{
    \map{
      \step[fieldset=address, null]
      \step[fieldset=editor, null]
      \step[fieldset=location, null]
    }
  }
}

\AtEveryBibitem{%
  	\clearfield{issn} 
  	\clearfield{doi} 
  	\clearfield{urldate}
	\clearfield{month}
	\clearfield{adress}
  	\ifentrytype{online}{}{
    \clearfield{url}
  }
}
\renewbibmacro*{url+urldate}{}
\renewbibmacro*{editor}{}
\renewbibmacro*{editors}{}
\renewbibmacro{in:}{}


\begin{tcbverbatimwrite}{tmp_\jobname_header.tex}
\title{Learning Memory Kernels in Generalized Langevin Equations\thanks{Submitted to the editors \today.
\funding{This work is supported in part by NSF DMS-2309378.}}}

\author{Quanjun Lang\thanks{Department of Mathematics, Duke University, Durham, NC 27708 USA \\(\email{quanjun.lang@duke.edu}, \email{jianfeng@math.duke.edu}).}
\and Jianfeng Lu\footnotemark[2]}

\headers{Learning Memory Kernels in Generalized Langevin Equations}{Quanjun Lang and Jianfeng Lu}
\end{tcbverbatimwrite}
\input{tmp_\jobname_header.tex}



\usepackage[mathlines]{lineno}
\linenumbers
\fi

\ifarXiv     
\documentclass[11pt]{article}  

\usepackage{amssymb}
\usepackage{amsthm}    
\usepackage{graphicx}          
\usepackage{amsmath}         
\usepackage{mathtools}
\mathtoolsset{showonlyrefs}

\usepackage{subcaption}
\usepackage{dsfont}
\usepackage{listings}
\usepackage{titlepic}
\usepackage{xcolor}
\usepackage{algorithm}
\usepackage{algorithmic}
\usepackage{todonotes}
\usepackage{array}
\usepackage{booktabs}
\usepackage{bm}
\usepackage{enumerate}
\usepackage{todonotes}

\usepackage{hyperref}
\usepackage{geometry}
\usepackage{tocloft}

\usepackage{enumitem}

\usepackage[
backend=biber,
style=alphabetic,
isbn=false,
maxbibnames=9,
]{biblatex}
\addbibresource{GLE.bib}
\addbibresource{RKHS_reg.bib}
\addbibresource{velocity_potential.bib}

\DeclareSourcemap{
  \maps[datatype=bibtex, overwrite]{
    \map{
      \step[fieldset=address, null]
      \step[fieldset=editor, null]
      \step[fieldset=location, null]
    }
  }
}

\AtEveryBibitem{%
  	\clearfield{issn} 
  	\clearfield{doi} 
  	\clearfield{urldate}
	\clearfield{month}
	\clearfield{adress}
  	\ifentrytype{online}{}{
    \clearfield{url}
  }
}
\renewbibmacro*{url+urldate}{}
\renewbibmacro*{editor}{}
\renewbibmacro*{editors}{}
\renewbibmacro{in:}{}

\usepackage{tikz}
\usetikzlibrary{decorations.pathreplacing,calc}

\hypersetup{
    colorlinks=true,
    linkcolor=blue,
    filecolor=magenta,      
    urlcolor=cyan,
    citecolor = blue,
}


\usepackage[width=\textwidth]{caption}


\theoremstyle{plain}
\newtheorem{thm}{Theorem}[section]
\newtheorem{theorem}{Theorem}[section]

\newtheorem{lemma}[thm]{Lemma}

\newtheorem{assumption}{Assumption}
\newtheorem{proposition}[thm]{Proposition}
\newtheorem{remark}[thm]{Remark}

\numberwithin{equation}{section}

\newcommand{\balpha}{\boldsymbol{\alpha}}


\DeclareMathOperator*{\argmin}{arg\,min}
\newcommand{\rbracket}[1]{\left(#1\right)}      
\newcommand{\sbracket}[1]{\left[#1\right]}      
\newcommand{\cbracket}[1]{\left\{#1\right\}}      
\newcommand{\norm}[1]{\left\|#1\right\|}
\newcommand{\abs}[1]{\left|#1\right|}
\newcommand{\innerp}[1]{\langle{#1}\rangle}


\def\mL{\mathcal{L}}
\def\mE{\mathcal{E}}

\def\mN{\mathcal{N}}



\def\R{\mathbb{R}}


\newcommand{\bR}{\mathbf{R}}

\newcommand{\bA}{\mathbf{A}}
\newcommand{\bb}{\mathbf{b}}

\newcommand{\bv}{\mathbf{v}}
\newcommand{\ba}{\mathbf{a}}
\newcommand{\br}{\mathbf{r}}
\newcommand{\bh}{\mathbf{h}}
\newcommand{\bt}{\mathbf{t}}

\newcommand{\bu}{\mathbf{u}}

\newcommand{\bc}{\mathbf{c}}
\newcommand{\bw}{\mathbf{w}}
\newcommand{\bZ}{\mathbf{Z}}
\newcommand{\boldeta}{\boldsymbol{\eta}}
\newcommand{\boldlambda}{\boldsymbol{\lambda}}
\newcommand{\boldalpha}{\boldsymbol{\alpha}}
\newcommand{\boldxi}{\boldsymbol{\xi}}
\newcommand{\boldphi}{\boldsymbol{\varphi}}
\newcommand{\Log}{\text{Log}}


\setcounter{tocdepth}{2}
\title{Learning Memory Kernels in Generalized Langevin Equations}
\author{Quanjun Lang, Jianfeng Lu}
\date{}

\fi

\newcommand{\reva}[1]{{\color{black}#1}} 
\newcommand{\revb}[1]{{\color{black}#1}}

\usepackage{caption}
\usepackage{subcaption}

\begin{document}
\maketitle

\ifjournal
\begin{tcbverbatimwrite}{tmp_\jobname_abstract.tex}
\fi

\begin{abstract}
We introduce a novel approach for learning memory kernels in Generalized Langevin Equations. This approach initially utilizes a regularized Prony method to estimate correlation functions from trajectory data, followed by regression over a Sobolev norm-based loss function with RKHS regularization. Our method guarantees improved performance within an exponentially weighted $L^2$ space, with the kernel estimation error controlled by the error in estimated correlation functions. We demonstrate the superiority of our estimator compared to other regression estimators that rely on $L^2$ loss functions and also an estimator derived from the inverse Laplace transform, using numerical examples that highlight its consistent advantage across various weight parameter selections. Additionally, we provide examples that include the application of force and drift terms in the equation.
\end{abstract}

\ifjournal
\begin{keywords}
Generalized Langevin equations, Prony Method, Identifiability
\end{keywords}
\begin{MSCcodes}
68Q32, 35R30, 37M10, 62M10
\end{MSCcodes}
\end{tcbverbatimwrite}
\input{tmp_\jobname_abstract.tex}
\fi

\tableofcontents

\section{Introduction}\label{sec:intro}
The Generalized Langevin Equation (GLE) is a widely used model for coarse-grained particles, which was first proposed by Mori \cite{moriTransportCollectiveMotion1965} and Zwanzig \cite{zwanzigMemoryEffectsIrreversible1961}. In their formalism, many irrelevant degrees of freedom in the dynamics are projected, left with a memory term and a strongly correlated random noise. Such a coarse-graining simplification makes direct computation feasible. More examples of derivations can be found in \cite{chorinProblemReductionRenormalization2006, liCoarsegrainedMolecularDynamics2010}. The GLE is particularly useful for studying complex systems such as biomolecules \cite{gordonGeneralizedLangevinModels2009}, climate \cite{frankignoulStochasticClimateModels1977}, chemistry \cite{xieInitioGeneralizedLangevin2024}, physics \cite{luSemiclassicalGeneralizedLangevin2019}, and quantum dynamics simulation \cite{dammakQuantumThermalBath2009}, where
the interactions depend not only on the current state but also on the sequence of past states.

Because of the wide interest in modeling, many efforts have been devoted to learning the memory kernel. Still, it can be difficult to obtain, even assuming the full observation of the system. For example, the memory kernel is approximated by a rational function, thus reducing the model complexity
\cite{leiDatadrivenParameterizationGeneralized2016, groganDatadrivenMolecularModeling2020}; by introducing auxiliary variables based on Prony's estimation of the autocorrelation function \cite{bockiusModelReductionTechniques2021}; by data-driven model reduction methods \cite{linDatadrivenModelReduction2021, luComparisonContinuousDiscretetime2016}; and by an iterative algorithm which ensures accurate reproduction of the correlation functions \cite{jungIterativeReconstructionMemory2017}.

In estimating the memory kernel, the above methods empirically balance the computation cost and accuracy and justify the performance using numerical examples. To give a theoretical guarantee of the estimator, we address the ill-posed nature of this inverse problem, as the memory kernel satisfies the Volterra equation of the first kind. Solving these equations using various regularization methods has been studied in \cite{lammSurveyRegularizationMethods2000, lammFutureSequentialRegularizationMethods1995, lammNumericalSolutionFirstKind1997, lammRegularizedInversionFinitely1997}. Learning memory kernel by constructing loss function based on the Volterra of the first kind has been considered in \cite{russoMachineLearningMemory2022, kerrwinterDeepLearningApproach2023}. 

Our major contribution is an algorithm with a performance guarantee, depending on the estimation accuracy of the correlation functions, through constructing a Sobolev loss function. The correlation functions are estimated by a regularized Prony method, which empirically preserves the accuracy of the derivatives under the smoothness assumption. The performance is examined in an $L^2(\rho)$ space, where $\rho$ is an exponential decaying measure representing our region of interest in our estimation. Using the Prony method to estimate correlation functions allows for the derivation of the memory kernel via a direct inverse Laplace transform. However, this estimation approach does not offer a guaranteed level of performance and is primarily used as a benchmark for comparison. We also compare the performance of the proposed estimator with other regression estimators based on $L^2$ loss functions, showing that the proposed estimator has better performance across different scales of $\rho$. In the end, we give examples with the presence of force term $F$ and a drift term in the equation, demonstrating the necessity of using ensemble trajectory data when the solution is not stationary. 

The paper is organized as follows. In Section \ref{sec_preliminary}, we give preliminaries of the problem statement and introduce the notations. We propose the main algorithm in Section \ref{sec_method}. The identifiability and convergence of our estimator are proved in Section \ref{sec_id}, and the performance is demonstrated by numerical examples in Section~\ref{sec_numerics}. We conclude in Section \ref{sec_conclusion}.

\section{Preliminary}\label{sec_preliminary}
\subsection{Generalized Langevin equations (GLE)}
In this paper, we restrict the discussion to one dimension and leave the generalization to high dimensions for future work. We consider the following GLE.
\begin{equation}\label{eq_main_GLE}
	mv'(t) = F(v(t)) -\int_0^{t}\gamma(t-s)v(s)ds + R(t),
\end{equation}

where $v:[0, \infty) \rightarrow \R$ is the velocity of a macroparticle, $m$ represents its mass which is assumed to be 1 in the rest of our discussion, and $F$ represents \reva{an external force term}. The memory kernel $\gamma:[0,\infty) \rightarrow \R$ is assumed to be continuous and integrable, and $R(t)$ is a stationary Gaussian process with zero mean, satisfying the fluctuation-dissipation theorem, so that
\begin{equation}\label{eq_fluctuation_dissipation}
	\innerp{R(t)R(s)} = \frac{1}{\beta}\gamma(t - s),
\end{equation}
where $\beta$ is the inverse temperature. We aim to estimate the memory kernel $\gamma$ from the discrete observations of trajectory data $v$.

Multiply $v(0)$ to the \eqref{eq_main_GLE} and take expectations, we obtain 
\begin{equation}\label{eq_main_volterra}
	g(t) = \int_0^t \gamma(t - s) h(s) ds = \int_{0}^t \gamma(s)h(t-s)ds,
\end{equation}
where 
\begin{align}\label{eq_phi_h}
	 h(t) := \innerp{v(t)v(0)}, \ \ \ \varphi(t) := \innerp{F(v(t))v(0)}, 
\end{align}

so that
\begin{align}\label{eq_g}
	g(t) := \innerp{v'(t)v(0)} -\innerp{F(v(t))v(0)} = - h'(t) + \varphi(t).
\end{align}
and $\innerp{R(t)v(0)}$ is assumed to be 0 as in \cite{chenComputationMemoryFunctions2014}. In the case that the corresponding solution $v$ is a stationary ergodic Gaussian process with zero means, $h(t)$ defines the autocorrelation function of $v$, namely $h(t - s) = \innerp{v(t)v(s)}$\reva{, and we call $g$ the force correlation function.} Throughout the following discussion, we will assume that $g$, $\varphi$, and $h$ are smooth. \reva{For the special case \( F = 0 \), we have \( g(t) = -h'(t) \). When \( F(v) = -\mu v \), this becomes \( g(t) = -h'(t) + \mu h(t) \), recovering the standard GLE with linear friction \(\mu\) and no external potential. 

Although the commonly used GLE formulation applies the force term to the position variable, here we intentionally consider a simplified first-order model where the force acts on the velocity. This setting allows us to focus on the identifiability of the memory kernel while also illustrating cases where the dynamics are non-stationary, such as when the force is nonlinear in $v$. Nevertheless, we point out that velocity-dependent potentials \cite{pucacco2004integrable} appear in the Lorentz force \cite{de2017spin} and Nuclear physics \cite{razavy1962analytical}.
}

\begin{remark}
	If $v$ is not stationary, we cannot approximate $h$ by the autocorrelation. Such a case might appear with the presence of a nonlinear force $F$ and an extra drift term. Therefore, ensemble trajectory data is necessary to construct the correlation functions $h$ and $g$. The details will be illustrated in Section \ref{sec_numerics}. 
\end{remark}

\subsection{Volterra equations}
Note that the equation \eqref{eq_main_volterra} is a convolutional Volterra equation of the first kind, which is known to be ill-posed, meaning a small error of $h$ and $g$ will cause an arbitrarily large error in the estimated $\gamma$. In the following discussion, we use the notation 
\begin{equation}
	(\gamma*h)(t) := \int_0^t \gamma(t - s)h(s)ds = \int_0^t \gamma(s)h(t - s)ds.
\end{equation}
Taking the derivative with $t$, the first kind Volterra equation \eqref{eq_main_volterra} becomes a second kind, namely
\begin{equation}\label{eq_Volterra_second_kind}
	g'(t) = \int_0^t \gamma(t - s) h'(s) ds + h(0)\gamma(t) = (\gamma*h')(t) + h(0)\gamma(t).
\end{equation}
For the case that $h(0) \neq 0$, this equation is well-posed, therefore shaping our thoughts of constructing the loss function.
The ill-posedness of the equation \eqref{eq_main_volterra} is because of the zero spectrum in the convolution operator, resulting in an unbounded inverse. However, when $h(0) \neq 0$, the spectrum of the operator in equation \eqref{eq_Volterra_second_kind} has a lower bound $h(0)$, making its inverse bounded. Many efforts have been devoted to providing regularization techniques for solving the Volterra equation of the first kind \cite{lammSurveyRegularizationMethods2000, lammFutureSequentialRegularizationMethods1995, lammNumericalSolutionFirstKind1997, lammRegularizedInversionFinitely1997}. 


\subsection{Laplace transform}
The Laplace transform of $f\in L^2(\R^+)$ is defined as
\begin{equation}
	\mL[f] = \widehat f (z) = \int_0^\infty f(t) e^{-zt}dt.
\end{equation}
The integral is a proper Lebesgue integral for complex domain parameters $z = \omega + i\tau$ such that $\omega \geq 0$. A  stronger assumption is that $f$ has exponential decay, 
\begin{equation}
	\abs{f(t)} \leq C e^{-\sigma t}
\end{equation}
for some $C, \sigma > 0$. In later discussions, we assume that the memory kernel $\gamma$ and the correlation function $h$, $\varphi$, and $g$ have exponential decay.

\begin{lemma}[Plancherel Theorem for Laplace transform]\label{lem_Plancherel}
Suppose $f$ has exponential decay. Then the following equality holds for $\omega \geq 0$.
\begin{equation}\label{eq_Plancherel_Laplace_exponential_decay}
	\frac{1}{2\pi}\int_{-\infty}^{+\infty} \abs{\widehat f(w + i\tau)}^2 d \tau = \int_0^\infty e^{-2wt}\abs{f(t)}^2 dt.
\end{equation}
\end{lemma}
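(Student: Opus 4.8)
The plan is to reduce the statement to the classical Plancherel theorem for the Fourier transform on $L^2(\R)$, by a simple change of function that absorbs the weight $e^{-\omega t}$ into $f$.

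First, fix $\omega \geq 0$ and set $g(t) := e^{-\omega t} f(t)$ for $t \geq 0$ and $g(t) := 0$ for $t < 0$. Using the exponential decay hypothesis $\abs{f(t)} \leq C e^{-\sigma t}$ with $\sigma > 0$, one gets $\abs{g(t)} \leq C e^{-(\omega+\sigma)t}$ for $t\geq 0$, so $g \in L^1(\R)\cap L^2(\R)$. In particular
\begin{equation}
	\int_{\R}\abs{g(t)}^2\,dt = \int_0^\infty e^{-2\omega t}\abs{f(t)}^2\,dt,
\end{equation}
which is the right-hand side of \eqref{eq_Plancherel_Laplace_exponential_decay}.

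Second, I would identify the Fourier transform of $g$ with the Laplace transform of $f$ restricted to the vertical line $\mathrm{Re}(z)=\omega$. With the convention $\mathcal{F}[g](\tau) = \int_{\R} g(t)e^{-i\tau t}\,dt$ (so that Parseval carries the factor $\tfrac{1}{2\pi}$), a direct computation, legitimate because $g\in L^1(\R)$, gives
\begin{equation}
	\mathcal{F}[g](\tau) = \int_0^\infty e^{-\omega t}f(t)e^{-i\tau t}\,dt = \int_0^\infty f(t)e^{-(\omega+i\tau)t}\,dt = \widehat f(\omega+i\tau).
\end{equation}
Since $g\in L^1(\R)\cap L^2(\R)$, this pointwise-defined transform coincides with the image of $g$ under the Plancherel isometry, so the Parseval identity $\tfrac{1}{2\pi}\int_{\R}\abs{\mathcal{F}[g](\tau)}^2\,d\tau = \int_{\R}\abs{g(t)}^2\,dt$ applies; substituting the two displayed identities yields exactly \eqref{eq_Plancherel_Laplace_exponential_decay}.

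The content is essentially bookkeeping rather than a real obstacle: one must pin down the normalization of the Fourier transform so the constant $\tfrac{1}{2\pi}$ lands on the correct side, and one must invoke the standard fact that for $g\in L^1\cap L^2$ the integral formula and the $L^2$-extension of the Fourier transform agree, so that Plancherel legitimately applies to the explicitly computed transform. The case $\omega=0$ is covered with no modification, since exponential decay of $f$ already gives $f\in L^2(\R^+)$.
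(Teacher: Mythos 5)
Your proof is correct and follows the same route as the paper: extend $e^{-\omega t}f(t)$ by zero to the negative axis, identify its Fourier transform with $\widehat f(\omega+i\,\cdot)$, check membership in $L^1(\R)\cap L^2(\R)$ via the exponential-decay hypothesis, and invoke the classical Plancherel theorem. Your write-up is in fact a bit more careful about the $L^1\cap L^2$ coincidence and the normalization bookkeeping.
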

\begin{proof}
From the definition of Laplace transform,
\begin{equation}
	\widehat f (\omega + i\tau ) = \int_0^\infty e^{-it\tau}e^{-\omega t}f(t)dt. 
\end{equation}
Notice that $\widehat f(\omega + i\cdot)$ is the Fourier transform of $e^{-w t}f(t)$ by extending $f = 0$ for $t \leq 0$. Using the assumption that $f$ has exponential decay, the extended function $e^{-w t}f(t) \in L^1(\R) \cap L^2(\R)$ for $\omega \geq 0$, hence $\widehat f(\omega + i\cdot) \in L^2(\R)$. Then the Plancherel theorem of Fourier transform provides an isometry of $f\in L^2(\rho)$ to its Laplace transform in $L^2(R)$ with a scaling constant $\frac{1}{2\pi}$, where the measure $\rho$ has density $e^{-2\omega t}$.
\end{proof}

The Lemma \ref{lem_Plancherel} will be applied to derive the coercivity constant for the loss function defined later in Section \ref{sec_method}. Moreover, the Laplace transform can convert the convolution into multiplication in the frequency domain.
\begin{lemma}[Laplace transform of convolution]\label{lem_lap_conv}
	Suppose $f_1, f_2$ both have exponential decay, then
\begin{equation}
	\mL\left[\int_0^t f_1(t-s) f_2(s)ds\right] = \widehat{f_1}(z) \cdot \widehat f_2(z).
\end{equation}
\end{lemma}
The inverse Laplace transform is given by the Bromwich integral,
\begin{equation}
	f(t) = \mL^{-1}\left[{\widehat f}(t)\right] = \frac{1}{2\pi i}\lim_{T \to \infty} \int_{\omega - iT}^{\omega + iT}e^{zt} \widehat {f}(z) dz
\end{equation}
where $\omega$ is a real number such that $\omega$ is on the right of all the poles in $\widehat {f}$. In particular, if $f$ has exponential decay, we could take $\omega \geq 0$. See \cite{kuhlmanReviewInverseLaplace2013} for a review of numerical methods for inverse Laplace transform. Among those algorithms, the method of accelerated Fourier series method is widely used \cite{dehoogImprovedMethodNumerical1982}. 
 
Laplace transform is particularly useful when solving the Volterra equations thanks to Lemma \ref{lem_lap_conv}. Taking the Laplace transform on both sides of the Volterra equation \eqref{eq_main_volterra}, we have 
\begin{equation}
	\widehat{g}(z) = \widehat{\gamma}(z)\widehat{h}(z), \ \text{and } \gamma = \mL^{-1}\left[\dfrac{\widehat{g}}{\widehat{h}}\right]
\end{equation}
However, the numerical inverse Laplace transform is severely unstable. See, for example, \cite{epsteinBadTruthLaplace2008} for a detailed discussion on the ill-posedness of inverse Laplace transform. We avoid such ill-posedness by explicitly computing the inverse Laplace transform of exponential functions, and also by avoid the inverses Laplace transform by minimizing a well-posed loss function.


%
%
%

\section{Proposed Method}\label{sec_method}
We present the main method in this section. The data is assumed to be noisy and discrete observations of the stationary solution $v$ of the GLE. We first estimate the correlation functions $h$, $\varphi$, and $g$ using regularized Prony methods. Then we apply regression based on a quadratic loss derived from a weighted Sobolev norm. We also derive an estimator for comparison purposes, using the explicit inverse Laplace transform based on the Prony estimation of the correlation functions.
\subsection{Observation data}
Suppose we have the following discrete and noisy observation data
\begin{equation}\label{eq_noisy_traj}
	\bv_l = v(\bt_l) + \boldxi_l
\end{equation}
where $\bt_l = l\Delta t$, $l = 1, \dots, L$ and $\boldxi_l$ are i.i.d. Gaussian noise with mean 0 and standard deviation $\sigma_{obs}$, independent of the initial condition $v(0)$. \reva{This setup corresponds to the first-order GLE model, where velocity is assumed to be directly observed. While in many experimental settings, position is more commonly measured, velocity measurements are also feasible. For instance, Doppler-based techniques, such as Laser Doppler Velocimetry (LDV) \cite{tabatabai2013novel} and Doppler radar \cite{atlas1973doppler}, provide direct access to velocity data and fall within our observation model. If only positional observations are available, we could estimate the velocity using finite difference or interpolation and subsequently use it to compute the correlation functions.} 

Under the assumption that the solution $v$ is ergodic, the autocorrelation functions $h$ defined as in \eqref{eq_phi_h} equals to  the temporal average of $v$,
\begin{equation}\label{eq_cts_acf}
	h(\tau) = \lim_{T \to \infty}\frac{1}{T}\int_0^T v(t)v(t+\tau)dt.
\end{equation}
The above integral can be approximated by a discrete sum, 
\begin{equation}\label{eq_discrete_acf}
	\bh_n = \frac{1}{L-n}\sum_{l = 1}^{L-n} \bv_{l}\bv_{l + n},
\end{equation}
and taking into account \eqref{eq_noisy_traj},
\begin{equation}
	\bh_n = \frac{1}{L-n}\rbracket{\sum_{l = 1}^{L - n} v(\bt_l)v(\bt_{l+n}) + \sum_{l = n}^{L - n} v(\bt_l)\bold{\xi}_{l + n} + \sum_{l = 1}^{L - n} \bold{\xi}_{l}v(\bt_{l+n}) + \sum_{l = 1}^{L-n} \bold{\xi}_{l}\bold{\xi}_{l + n}}.
\end{equation}
Due to the independence of $\boldeta_n$, the second and third term converges to $0$ as $M$ approaches infinity because of the central limit theorem, and the last term converges to 0 except for the case $n = 0$. As a result, the $\bh_0$ is an estimation  of $h(0) + \sigma_{obs}$. Moreover, when the memory kernel is integrable and continuous, the autocorrelation function $h$ is differentiable with 
	$h'(0) = 0$, see for example \cite{baczewskiNumericalIntegrationExtended2013}. 
In our subsequent discussion, we impose this condition and eliminate the estimation $\bh_0$, focusing solely on the data $\{\bh_n\}_{n = 1}^N$.

\subsection{Regularized Prony method}\label{subsec_reg_prony}
The well-known Prony method assumes the target function $h$ is a summation of exponential functions \cite{hauerInitialResultsProny1990}. It transfers the nonlinear parameter estimation into two linear regressions and a root-finding problem. For completeness, we briefly discuss the Prony method and introduce our regularization methods according to our assumptions. 

Given discrete estimations $\{\bh_n\}_{n = 1}^N$, we aim to find $(\bw_k, \br_k)_{k = 1}^{p'}$, so that 

\begin{equation}
	\reva{\min_{\{\bold{w}_k\}} \left\| \bh_n - \sum_{k=1}^{p{\prime}} \bold{w}_k \br_k^n \right\|^2,}
\end{equation}
where $\bw_k$ are the amplitudes, $\br_k$ are the exponential of complex frequencies. Choosing the number of frequencies ${p'}$ and appropriate observation length $N$ is another problem of interest. The choice of $N$ reflects the precision of estimated $\bh_n$, as larger values of $n$ correspond to smaller sample sizes $L-n$, leading to increased estimation errors. The balance between increasing the number of estimation grids and maintaining estimation accuracy is achieved through careful consideration of the parameter $N$. We refer to \cite{carriereHighResolutionRadar1992} for a singular value decomposition approach in selecting the optimal parameter $N$ and $p'$, and \cite{bockiusModelReductionTechniques2021} for numerical comparisons. The classical case ($N = 2{p'}$) obtains an exact fit between the sampled data and the exponentials if the regression matrices defined later are not singular. However, in practical cases, we often require $N > 2{p'}$ and solve linear systems using least squares. 


Firstly we solve the following equation for coefficients $\ba = (\ba_1, \dots, \ba_{p'})^\top$. 
\begin{equation}\label{eq_prony_poly_coef_a}
	\begin{bmatrix}
		\bh_{p'} & \bh_{{p'}-1} & \cdots & \bh_1\\
		\bh_{{p'}+1} & \bh_{{p'}} & \cdots & \bh_{2}\\
		\vdots & \vdots & \ddots & \vdots \\
		\bh_{N-1} & \bh_{N-2} & \cdots & \bh_{N - {p'}}
	\end{bmatrix}
	\begin{bmatrix}
		\ba_1\\ \vdots \\ \ba_{p'}	
	\end{bmatrix}
	=
	-
	\begin{bmatrix}
		\bh_{{p'}+1}\\ \vdots \\ \bh_{N}
	\end{bmatrix}
\end{equation}
Given the coefficients $\bold{a}$, we construct the characteristic polynomial, 
\begin{equation}
	\phi(z) = z^{p'} + \ba_1 z^{{p'}-1} + \cdots \ba_{{p'}-1}z + \ba_{p'}.
\end{equation} 
The second step is finding the roots of $\phi(z)$, denoted by $\{\br_1, \dots, \br_{p'}\}$. 
The process of root finding is converted into a generalized eigenvalue problem known as the matrix pencil method \cite{huaMatrixPencilMethod1990} to enhance stability. Yet, this root-finding is still very sensitive to small changes in coefficients, which can lead to disproportionately large errors in the estimation of roots. See, for example, the Wilkinson's polynomial \cite{wilkinsonEvaluationZerosIllconditioned1959}. However, we emphasize that instead of precisely determining the parameters $(\bw_k, \br_k)$, our primary objective is to achieve a nonparametric estimation of the function $h$ minimizing the $L^2(\rho)$ error, with a measure $\rho$ to be defined later. Such a nonparametric goal justifies our following regularization steps. To incorporate with the assumption of exponential decay in $h$, we adjust the root $\br_k$ when its absolute value exceeds the threshold $\sigma$ of the exponential decaying assumption, namely
\begin{equation}\label{eq_prony_root_aug}
	\widetilde{\br_k} = \frac{\br_k}{e^{\sigma}\abs{\br_k}}, \text{ if } \abs{\br_k} \geq e^{-\sigma}, \widetilde{\br_k} = \br_k \text{ otherwise }.
\end{equation}
\reva{
We then define the exponential modes
\begin{equation}
	\boldlambda_k = \frac{\text{Log}(\widetilde{\br_k})}{\Delta t}.
\end{equation}
\begin{remark}\label{rmk_prnoy_root_aug}
The above formula applies when \( \widetilde{\br}_k \notin \mathbb{R}^- \). If \( \widetilde{\br}_k \in \mathbb{R}^- \), it lies on the branch cut of the complex logarithm, and \( \Log(\widetilde{\br}_k) \) becomes multivalued. To solve this, we augment the parameters $\boldlambda_k$ to include the two values on both sides of the cutline \cite{bockiusModelReductionTechniques2021}, $\log\abs{\br_k} + \pi i$ and $\log\abs{\br_k} - \pi i$. 	
\end{remark}
}
After rearranging, we have $\{{\boldlambda}_k\}_{k = 1}^{p}$, where $p \geq p'$ because the augmentation. In the final step of our process, we focus on determining the coefficients $\bw$. The original Prony solves the linear system denoted by $\bold{Z}\bw = \bold{h}$ (Note that we dropped the estimation of $\bh_0$),
\begin{equation}
	\begin{bmatrix}
		e^{{\boldlambda_1}\Delta t} & \dots & e^{{\boldlambda_{p}}\Delta t}\\
		\vdots & \ddots & \vdots \\
		e^{{\boldlambda_1}N\Delta t} & \cdots & e^{{\boldlambda_{p}}N\Delta t}
	\end{bmatrix}
	\begin{bmatrix}
		\bw_1\\ \vdots \\ \bw_{p}
	\end{bmatrix}
	=
	\begin{bmatrix}
		\bh_1 \\ \vdots \\ \bh_{N}
	\end{bmatrix}
\end{equation}
Given that the number of data points \( N \) exceeds \( p \), we employ a least squares solution with constraints based on $h'(0) = 0$, 
\begin{equation}\label{eq_dh_est_0}
	\sum_{k = 1}^{p} \bw_k {\boldlambda}_k = 0,
\end{equation}
and RKHS regularization \cite{luDataAdaptiveRKHS2022}, which provides inherent smoothness constraints on $h$ determined by the matrix $\bold{Z}$ itself. The method we used to determine $\bw$ is concluded as 
\begin{equation}\label{eq_prony_weight_RKHS}
	\bw = \argmin_{\sum_{k = 1}^{p} \bw_k {\boldlambda}_k = 0}\norm{\bold{Z}\bw - \bh}^2 + \lambda\norm{(\bold{Z}\bold{Z}^\top)^{\dag}\bw}^2
\end{equation}
where $(\bZ^\top\bZ)^{\dag}$ represents the pseudo inverse of $\bZ^\top\bZ$, and the last term in the above equation represents the RKHS norm derived from $\bZ^\top \bZ$ and the optimal parameter $\lambda$ is selected by L-curve method. See \cite{luDataAdaptiveRKHS2022} for more details.

Overall, our regularized Prony method distinguishes from the classic approach by omitting the biased estimation of \( \bh_0 \), segregating multi-valued Prony modes, constraining decay speed and \( h'(0) \), and adding regularization to the coefficients. \reva{ Now the target function $h$ is approximated by
\begin{equation}\label{eq_prony_h}
	\widetilde{h}(t) = \sum_{k = 1}^{p} \bw_k e^{\boldlambda_k t}.
\end{equation}}
The method is concluded in the Algorithm \ref{alg_reg_Prony}.

\begin{algorithm}
\caption{Regularized Prony method}\label{alg_reg_Prony}
    \hspace*{\algorithmicindent} \textbf{Input:} Trajectory $\{\bv_l\}_{l = 1}^L$ observed on discrete time grid $\{\bt_l\}_{l = 1}^L$.\\
    \hspace*{\algorithmicindent} \textbf{Output:} Estimated auto correlation function $\widetilde{h}.$
\begin{algorithmic}[1]
\STATE{Estimate  $\bh$ on discrete time grids using \eqref{eq_discrete_acf} and drop $\bh_0$.}
\STATE{Choose $N$ and $p'$ and estimate the polynomial coefficients $\ba$ using \eqref{eq_prony_poly_coef_a}.}
\STATE{Find the roots $\br$ of the characteristic polynomial $\phi$ and regularize using \eqref{eq_prony_root_aug}.}
\STATE{Augment the logarithm $\{\boldlambda_k\}_{k = 1}^p$ of $\br$ using Remark \ref{rmk_prnoy_root_aug}.}
\STATE{Determine the weight $\bw$ using least square with RKHS regularization \eqref{eq_prony_weight_RKHS}, so that $\widetilde{h} = \sum_{k = 1}^p \bw_k e^{\boldlambda_k t}$.}
\end{algorithmic}
\end{algorithm}

%

\subsection{The force term $F$}

The process for estimating \( g \) follows a similar pattern, but with a distinction based on  \( F \). We divide the scenario into two cases: when \( F \) equals 0 and when it does not.

\subsubsection{The case $F = 0$} 
From \eqref{eq_g}, we have $g(t) = -h'(t)$. Take Laplace transform, we have $\widehat{g}(z) = h(0) - z\widehat{h}(z)$. Therefore the following relationship holds for the Laplace transforms of $\gamma$ and $h$. 
\begin{equation}\label{eq_gamma_h_relation}
	\widehat{\gamma}(z) = \frac{\widehat{g}(z)}{\widehat{h}(z)} = \frac{h(0) - z\widehat{h}(z)}{\widehat{h}(z)}, \ \ \widehat{h}(z) = \frac{h(0)}{z + \widehat{\gamma}(z)}
\end{equation}
With the Prony approximation $\widetilde{h}$ derived from above, we firstly have a straightforward estimation of $g$,
\begin{equation}\label{eq_widetilde_dh}
	\widetilde{g}(t) = -\widetilde{h}'(t) = -\sum_{k = 1}^{p} {\bw_k\boldlambda_k} e^{\boldlambda_k t}
\end{equation}

With $\widetilde{g}$ ready for later use, here we briefly introduce a direct estimation of $\gamma$ based on explicit inverse Laplace transform. Note that the Laplace transform of $\widetilde{h}$ and $\widetilde{g}$ are given by 
\begin{equation}
	\mL[\widetilde{h}] = \sum_{k = 1}^p \frac{\bw_k}{z - \boldlambda_k}, \ \mL[\widetilde{g}] = -\sum_{k = 1}^p \frac{\bw_k\boldlambda_k}{z - \boldlambda_k}, 
\end{equation}
Using \eqref{eq_gamma_h_relation}, we can derive an estimator $\theta_{L}$ of the memory kernel, whose Laplace transform is given by
\begin{equation}\label{eq_theta_prony_inv}
	\widehat{\theta_{L}}(z) = \frac{ \mL[-\widetilde{h'}]}{ \mL[\widetilde{h}]} = \frac{-\sum_{k = 1}^{p}\bw_k\boldlambda_k\prod_{j \neq k}{(z - \boldlambda_j)}}
{\sum_{k = 1}^{p}\bw_k\prod_{j \neq k}{(z - \boldlambda_j)}}
\end{equation}
Because of the constraints \eqref{eq_dh_est_0} derived from $\widetilde{h}'(0) = 0$, the coefficients of the $(p-1)^{th}$ order term in the numerator is 0, therefore we can use fraction decomposition and explicit inverse Laplace transform,
\begin{equation}\label{eq_theta_prony_frac_decomp}
	\widehat{\theta_{L}}(z) = \sum_{k = 1}^{p-1}\frac{\bu_k}{z - \boldeta_k}, \ \ \theta_{L}(t) = \sum_{k =1}^{p-1}\bu_k e^{-\boldeta_k t}
\end{equation}


\begin{remark}\label{rmk_gamma_h}
Suppose the true memory kernel $\gamma$ is given by $\gamma(t) = \sum_{k = 1}^p u_k e^{\eta_k t}$, then we have from \eqref{eq_gamma_h_relation},
\begin{equation}
	\widehat{h}(z) = \frac{h(0)}{z + \sum_{k = 1}^p\frac{u_k}{z - \eta_k}} = \frac{h(0) \prod_{k = 1}^p(z- \eta_k)}{z\prod_{k = 1}^p(z- \eta_k) + \sum_{k = 1}^p u_k\prod_{j \neq k}(z - \eta_j)}.
\end{equation}
Notice that the numerator has degree $k$ and the denominator has degree $k+1$. Then using fractional decomposition, we have 
\begin{equation}
	\widehat{h}(s) = \sum_{k = 1}^{p+1} \frac{w_k}{s - \lambda_k}.
\end{equation}
with the parameters derived from the above equality. Therefore we obtain a relationship between a Prony-like memory kernel and its autocorrelation function $h$. This relationship will be applied in the numerical examples in Section \ref{sec_numerics}.

%
%
\end{remark}

\subsubsection{The case $F$ is not 0}
Note that in this case, $g = -h' + \varphi$. Firstly we use the $\widetilde{h}'$ from \eqref{eq_widetilde_dh}. 
Since $\varphi$ is given by the ergodic assumption on $v$, 
\begin{equation}
	\varphi(\tau) = \lim_{T \to \infty}\int_0^T v(t)F(v(t+\tau))dt,
\end{equation}
we use the Riemann sum to approximate the integral,
\begin{equation}
	\boldphi_n = \frac{1}{L-n}\sum_{l = 1}^{L - n} \bv_{l}F(\bv_{l+n}),
\end{equation}
and then use the regularized Prony method as concluded in the Algorithm \ref{alg_reg_Prony}. The result is denoted by
\begin{equation}
	\widetilde{\varphi}(t) = \sum_{k = 1}^{q}\bw_k'e^{\boldlambda_k't}	
\end{equation}
In conclusion, we have 
\begin{equation}\label{eq_prony_g}
	\widetilde{g}(t) = \sum_{k = 1}^{p} {\bw_k\boldlambda_k}e^{{\boldlambda_k}t} + \sum_{k = 1}^{q} {\bw_k'}e^{{\boldlambda_k'}t}
\end{equation}

%
%
%
%
%
%

Again we provide a brief discussion about estimating the memory kernel using explicit inverse Laplace transform. By equation \eqref{eq_gamma_h_relation},
\begin{equation}
	\widehat{{\theta_{L}}}(z)  = \frac{\mL[-\widetilde{h'}] + \mL[\widetilde{\varphi}]}{\mL[\widetilde{h}]} 
	= 
	\sum_{k = 1}^{p-1}\frac{\bu_k}{z - \boldeta_k} + \frac{\sbracket{\sum_{k = 1}^{q}\bw_k'\prod_{j \neq k}{(z - \boldlambda_j')}}\prod_{i=1}^p{(z - \boldlambda_j)}}
{\sbracket{\sum_{k = 1}^{p}\bw_k\prod_{j \neq k}{(z - \boldlambda_j)}}\prod_{i=1}^q{(z - \boldlambda_j')}}
\end{equation}
where the first term on the left-hand side is derived from equation \eqref{eq_theta_prony_frac_decomp}. 
Recall that $\varphi(0) = \left<F(v(0)), v(0)\right>$, we cannot have $\varphi(0) = 0$ as the case for $h'(0)$. Hence in the second term, both the numerator and the denominator have degree $p + q - 1$, and its fractional decomposition has a constant term $C = \frac{\sum_{k = 1}^q \bw_k'}{\sum_{k = 1}^p \bw_k}$, leading to $\delta_0(t)$,  a Dirac function at 0 in the estimation $\theta_L$, which is approximated by a mollified Gaussian density function in practice. The final result after partial fraction decomposition and explicit inverse Laplace transform is
\begin{equation}\label{eq_theta_prony_frac_decomp_F_not_0}
	\widehat{\theta_{L}}(z) = \sum_{k = 1}^{q'}\frac{\bu_k}{z - \boldeta_k} + C, \ \ \theta_{L}(t) = \sum_{k = 1}^{q'} \bu_ke^{\boldeta_k t} + C\delta_0(t)
\end{equation}

\subsection{Sobolev norm loss function}
Although we have the Prony-like estimator $\theta_L$, it is hard to give an analysis of its performance. Instead, with the 
given estimation of $\widetilde{h}$ and $\widetilde{g}$, we use least square to learn the memory kernel with the following loss function, 
\begin{equation}\label{eq_loss_Sobolev_norm}
	\mE(\theta) = \norm{\widetilde{g} - (\theta*\widetilde{h})}^2_{H^1_{\boldalpha}(\rho)}
\end{equation}
where $\theta$ is a candidate kernel function, $\rho$ denotes the measure on $\R^{+}$ with density $e^{-2\omega t}$ and $\norm{f}^2_{H^1_{\boldalpha}(\rho)}$ represents the squared Sobolev norm with scale parameter $\boldalpha = (\alpha_1, \alpha_2)$ which  satisfies $\alpha_1, \alpha_2 > 0$ and $\alpha_1 + \alpha_2 = 1$, so that 
\begin{equation}
\norm{f}_{H^{1}_{\boldalpha}(\rho)}^2  = \int_0^{+\infty} \rbracket{\alpha_1\abs{f(t)}^2 + \alpha_2\abs{f'(t)}^2} d\rho(t).
\end{equation}
The Sobolev norm provides coercivity with respect to the $L^2(\rho)$ norm. The details will be introduced in Section \ref{sec_id}. The measure $\rho$ can be considered as a weight of estimation. A large value of $\omega$ indicates our focus on the value of the memory kernel near the origin and vice versa. 
Using the basis $\cbracket{\psi_k}_{k = 1}^K$, 
the least square problem becomes estimating the coefficients $\bc = (\bc_1, \dots, \bc_K)^\top$, so chat
\begin{equation}
	\bc = \argmin_{\bc \in \R^K}\bc^\top \bA \bc - 2\bb^\top \bc.
\end{equation}
where $\bA \in \R^{K\times K}, \bb \in \R^{K}$, and 
\begin{equation}\label{eq_lsq_A_b}
	\bA_{ij} = \innerp{\psi_i*\widetilde{h}, \psi_j*\widetilde{h}}_{H^1_{\boldalpha}(\rho)}, \text{  }\bb_{i} = \innerp{\psi_i*\widetilde{h}, \widetilde{g}}_{H^1_{\boldalpha}(\rho)}
\end{equation}
We again use RKHS regularization \cite{luDataAdaptiveRKHS2022} as in Section \ref{subsec_reg_prony}, so that 
\begin{equation}\label{eq_lsq_RKHS_c}
	\bc = \argmin_{\bc \in \R^K}\bc^\top \bA \bc - 2\bb^\top \bc + \lambda \bc^\top \bA^\dag \bc
\end{equation}
Finally, the estimator of the memory kernel is given by $\theta = \sum_{k = 1}^K \bc_k \psi_k$. 
Our algorithm is concluded in Algorithm \ref{alg_main}.



\begin{algorithm}
\caption{Main Algorithm}\label{alg_main}
    \hspace*{\algorithmicindent} \textbf{Input:} Trajectory $\{\bv_n\}_{n = 1}^M$ observed on discrete time grid $\{\bt_n\}_{n = 1}^M.$\\
    \hspace*{\algorithmicindent} \textbf{Output:} Estimated memory kernel $\theta$ and $\theta_L.$
\begin{algorithmic}[1]
\STATE{Estimate $\bh$ using regularized Prony method (Algorithm \ref{alg_reg_Prony}).}
\IF{$F = 0$}
\STATE{Estimate $g$ using $\widetilde{g}$ derived from the derivative of $\widetilde{h}$ \eqref{eq_widetilde_dh}.}
\STATE{Achieve $\theta_L$ from \eqref{eq_theta_prony_frac_decomp} using fractional decomposition and explicit inverse Laplace transform.}
\ELSE
\STATE{Estimate $\varphi$ using Algorithm \ref{alg_reg_Prony} and derive $\widetilde{g}$ from \eqref{eq_prony_g}.}
\STATE{Achieve $\theta_L$ from \eqref{eq_theta_prony_frac_decomp_F_not_0}.}
\ENDIF
\STATE{Construct $\bA$, $\bb$ using \eqref{eq_lsq_A_b} given a basis $\{\psi_k\}_{k = 1}^K$.}
\STATE{Solve $\bc$ using \eqref{eq_lsq_RKHS_c} and achieve $\theta=\sum_{k = 1}^K\bc_k \psi_k$.}
\end{algorithmic}
\end{algorithm}

\section{Identifiability}\label{sec_id}
In this section, we prove the coercivity of the loss defined in \eqref{eq_loss_Sobolev_norm}, and control the estimation error in $\gamma$ by the error in the estimation of $h$ and $g$. The technique we used here is the Plancherel Theorem of Laplace transform as in Lemma \ref{eq_Plancherel_Laplace_exponential_decay}.

\subsection{Sobolev loss functions}
Suppose $g$ and $h$ are the true correlation functions, and the true kernel $\gamma$ is a solution for the equation \eqref{eq_main_volterra} and \eqref{eq_Volterra_second_kind}, from which we substitute $g$ and $g'$ into the loss function,
\begin{equation}\label{eq_loss_mE}
	\mE(\theta) = \int_{0}^{+\infty} \rbracket{\alpha_1\abs{(\gamma - \theta)*h}^2 + \alpha_2\abs{\rbracket{(\gamma - \theta)*h}'}^2} \rho(t) dt.
\end{equation}
We will make two assumptions before we give an analysis of this loss function. Firstly, a decaying and smoothness assumption on the kernel $\gamma$ and the observation data $h$ and $g$.
\begin{assumption}\label{asmp_exponential_dacay}
	The true kernel $\gamma$, the candidate kernel $\theta$, the correlation functions $h$ and $g$, and their noisy estimation $h_\varepsilon$ and $g_\varepsilon$ are smooth. These functions and their derivatives has exponential decay with parameter $\sigma>0$.
\end{assumption}
Let $\widehat{f}$ be the Laplace transform of a function $f$. For $\omega > 0$, denote that 
	\begin{equation}\label{eq_Lap_f_bound}
		M^f_\omega = \sup_{z = \omega + i\tau, \tau \in \R}\rbracket{\alpha_1\abs{\widehat{f}(z)}^2 + \alpha_2\abs{z\widehat{f}(z)}^2}, \quad m^f_\omega = \inf_{z = \omega + i\tau, \tau \in \R}\rbracket{\alpha_1\abs{\widehat{f}(z)}^2 + \alpha_2\abs{z\widehat{f}(z)}^2 }	
	\end{equation}
	We make the next assumption on $\gamma$, $h$ and $h_\varepsilon$. 
\begin{assumption}\label{asmp_omega_bdd}
	 There exists $\omega > 0$ such that $m_\omega^\gamma$, $m_\omega^h$, $m_\omega^{h_\varepsilon} >0$ and $M_\omega^\gamma$, $M_\omega^h$, $M_\omega^{h_\varepsilon} <\infty$.
\end{assumption}
In the above notations, $\omega$, $\gamma$, $h$ and $h_\varepsilon$ indicates the correspondence as in \eqref{eq_Lap_f_bound}.

We first prove the coercivity condition of the loss function.
\begin{theorem}\label{thm_coercivity}
	Under Assumption \ref{asmp_exponential_dacay} and Assumption \ref{asmp_omega_bdd}, 
	we have 
	\begin{equation}\label{eq_coercivity}
		m_\omega^h \norm{\theta - \gamma}^2_{L^2(\rho)}\leq \mE(\theta)  \leq M_\omega^h \norm{\theta - \gamma}^2_{L^2(\rho)}
	\end{equation}
	where \reva{$\rho(t) = e^{-2\omega t}dt$} and supported on $\R^+$. Hence $\gamma$ is identifiable by $\mE$ in $L^2(\rho)$, i.e. $\mE(\theta) = 0$ implies $\theta - \gamma = 0$ in $L^2(\rho)$.
\end{theorem}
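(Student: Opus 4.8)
The plan is to apply the Plancherel theorem for the Laplace transform (Lemma \ref{lem_Plancherel}) to convert the $H^1_{\boldalpha}(\rho)$ norm appearing in $\mE(\theta)$ into an integral over the vertical line $z = \omega + i\tau$ in the frequency domain, where the convolution $(\gamma-\theta)*h$ becomes the product $(\widehat\gamma - \widehat\theta)(z)\,\widehat h(z)$. First I would observe that $\mE(\theta)$ from \eqref{eq_loss_mE} can be written as $\int_0^\infty(\alpha_1|F(t)|^2 + \alpha_2|F'(t)|^2)\,d\rho(t)$ where $F = (\gamma-\theta)*h$. By Assumption \ref{asmp_exponential_dacay} the functions $\gamma-\theta$ and $h$ are of exponential type $\sigma$, so $F$ and $F'$ are as well; in particular for any $\omega > 0$ (and certainly the $\omega$ from Assumption \ref{asmp_omega_bdd}) both $F$ and $F'$ lie in $L^2(\rho)$ with $d\rho = e^{-2\omega t}dt$, and Lemma \ref{lem_Plancherel} applies to each.

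Next I would apply Lemma \ref{lem_Plancherel} termwise. For the first term, $\int_0^\infty e^{-2\omega t}|F(t)|^2\,dt = \frac{1}{2\pi}\int_{-\infty}^{+\infty}|\widehat F(\omega+i\tau)|^2\,d\tau$. For the second term I would use that the Laplace transform of $F'$ is $z\widehat F(z) - F(0)$, and then note that $F(0) = \big((\gamma-\theta)*h\big)(0) = 0$ since the convolution integral over $[0,0]$ vanishes; hence $\widehat{F'}(z) = z\widehat F(z)$ and $\int_0^\infty e^{-2\omega t}|F'(t)|^2\,dt = \frac{1}{2\pi}\int_{-\infty}^{+\infty}|z\widehat F(\omega+i\tau)|^2\,d\tau$. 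Combining, and using Lemma \ref{lem_lap_conv} to write $\widehat F(z) = (\widehat\gamma(z)-\widehat\theta(z))\widehat h(z)$, we get
\begin{equation}
	\mE(\theta) = \frac{1}{2\pi}\int_{-\infty}^{+\infty}\rbracket{\alpha_1 + \alpha_2|z|^2}\,|\widehat\gamma(z)-\widehat\theta(z)|^2\,|\widehat h(z)|^2\,d\tau,
\end{equation}
evaluated at $z = \omega + i\tau$. Here I would rewrite $\rbracket{\alpha_1 + \alpha_2|z|^2}|\widehat h(z)|^2 = \alpha_1|\widehat h(z)|^2 + \alpha_2|z\widehat h(z)|^2$, which is exactly the quantity bounded above by $M_\omega^h$ and below by $m_\omega^h$ in the definition \eqref{eq_Lap_f_bound}. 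Pulling these bounds out of the integral gives
\begin{equation}
	m_\omega^h \cdot \frac{1}{2\pi}\int_{-\infty}^{+\infty}|\widehat\gamma(z)-\widehat\theta(z)|^2\,d\tau \;\leq\; \mE(\theta) \;\leq\; M_\omega^h \cdot \frac{1}{2\pi}\int_{-\infty}^{+\infty}|\widehat\gamma(z)-\widehat\theta(z)|^2\,d\tau,
\end{equation}
and applying Lemma \ref{lem_Plancherel} once more to the function $\gamma - \theta$ (of exponential type by Assumption \ref{asmp_exponential_dacay}) turns the middle integral into $\norm{\theta-\gamma}^2_{L^2(\rho)}$, which is \eqref{eq_coercivity}. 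The identifiability claim then follows immediately: if $\mE(\theta)=0$ then $m_\omega^h\norm{\theta-\gamma}^2_{L^2(\rho)} = 0$, and since $m_\omega^h > 0$ by Assumption \ref{asmp_omega_bdd}, we conclude $\theta = \gamma$ in $L^2(\rho)$.

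I expect the only delicate point to be the justification that Lemma \ref{lem_Plancherel} may be applied to $F$, $F'$, and $\gamma-\theta$ — i.e., verifying the exponential-type hypothesis propagates through convolution and differentiation — and the clean vanishing of the boundary term $F(0)=0$ that makes $\widehat{F'}(z) = z\widehat F(z)$ exact rather than up to a constant. Everything else is bookkeeping with the definitions in \eqref{eq_Lap_f_bound}. One should also note that $m_\omega^h, M_\omega^h$ are used here but Assumption \ref{asmp_omega_bdd} only guarantees finiteness/positivity at a specific $\omega$; the statement of the theorem should be read as: for that $\omega$, with $\rho$ the corresponding measure, \eqref{eq_coercivity} holds.
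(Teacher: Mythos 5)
Your proof is correct and follows essentially the same route as the paper's: set $F=(\gamma-\theta)*h$, apply the Laplace--Plancherel lemma termwise, use $F(0)=0$ to get $\widehat{F'}=z\widehat F$, factor out $\alpha_1|\widehat h|^2+\alpha_2|z\widehat h|^2$, and bound by $m_\omega^h$, $M_\omega^h$ before applying Plancherel once more. The paper denotes your $F$ by $\Lambda$ and is slightly more terse about $\Lambda(0)=0$ and the propagation of exponential type, but the decomposition, the lemmas invoked, and the order of steps are identical.
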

\begin{proof}
	Let $f(t) = \theta(t) - \gamma(t)$ and $\Lambda(t) = \int_0^t f(s)h(t-s)ds$. Then we have $f$, $\Lambda$ and $\Lambda'$
	have exponential decay with parameter $\sigma$. By the Lemma \ref{eq_Plancherel_Laplace_exponential_decay}, using the notation $z = \omega + i\tau$, we have 
	\begin{align*}
		\mE(\theta) &= \int_0^{+\infty} \rbracket{\alpha_1\abs{\Lambda(t)}^2 + \alpha_2\abs{\Lambda'(t)}^2} e^{-2\omega t} dt  
		= \frac{1}{2\pi}\int_{-\infty}^{+\infty} \alpha_1\abs{\widehat \Lambda(z)}^2 + \alpha_2\abs{\widehat{\Lambda'}(z)}^2 d \tau
	\end{align*}
	Since $\widehat{\Lambda}(z) = \widehat h(z)\widehat f(z)$ and $\widehat{\Lambda'}(z) = z\widehat{\Lambda}(z) - \Lambda(0) = z\widehat{h}(z)\widehat{f}(z)$, 
\begin{equation}
		\mE(\theta) = \frac{1}{2\pi}\int_{-\infty}^{+\infty} \rbracket{\alpha_1\abs{\widehat h(z)}^2 + \alpha_2\abs{z\widehat{h}(z)}^2}\abs{\widehat f(z)}^2 d \tau
\end{equation}
The results follow from the bounds of $\alpha_1\abs{\widehat h(z)}^2 + \alpha_2\abs{z\widehat{h}(z)}^2$ in Assumption \ref{asmp_omega_bdd} and using Lemma \ref{eq_Plancherel_Laplace_exponential_decay} again. The identifiability is a natural result from \eqref{eq_coercivity}.
\end{proof}

In practice, we only have the noisy estimations of $g$ and $h$, denoted as $g_\varepsilon$ and $h_\varepsilon$. We use $\mE_{g_\varepsilon, h_\varepsilon}$ to denote the loss function when observing $g_\varepsilon$ and $h_\varepsilon$, and similarly for $\mE_{g_\varepsilon, h}$. The minimizer of  $\mE_{g_\varepsilon, h_\varepsilon}$ and $\mE_{g_\varepsilon, h}$ are denoted by $\gamma_{g_\varepsilon, h_\varepsilon}$ and $\gamma_{g_\varepsilon, h}$ respectively. Such solutions exist because of Assumption \ref{asmp_exponential_dacay}.
\begin{remark}\label{rmk_coercivity_h_eps}
	With the bound for $h_\varepsilon$ as in Assumption \ref{asmp_omega_bdd}, we have a similar coercivity condition for $\mE_{g, h_\varepsilon}$, namely
	\begin{equation}\label{eq_coercivity_error}
		m_\omega^{h_\varepsilon} \norm{\theta - \gamma_{g, h_\varepsilon}}^2_{L^2(\rho)}\leq \mE_{g, h_\varepsilon}(\theta)  \leq M_\omega^{h_\varepsilon} \norm{\theta - \gamma_{g, h_\varepsilon}}^2_{L^2(\rho)}
	\end{equation}
\end{remark}

Given the coercivity condition, we can prove the convergence of the estimation error in $\gamma$ with respect to the estimation error of $g$ and $h$. This is equivalent to saying the estimation of $\gamma$ is well-posed.  

\begin{theorem}\label{thm_main_error_conv}
	With the notations defined above, 
	\begin{enumerate}
		\item When we have the true $h$, the $L^2(\rho)$ error in the estimator is bounded by the error in $g$.	
		\begin{equation}
			\norm{\gamma - \gamma_{g_\varepsilon, h}}_{L^2(\rho)}^2 \leq \frac{1}{m_\omega^h}\norm{g - g_\varepsilon}^2_{H^1_{\boldalpha}(\rho)}
		\end{equation}
		\item With estimated $h_\varepsilon$ and $g_\varepsilon$, we have 
	\begin{equation}
		\norm{\gamma - \gamma_{g_\varepsilon, h_\varepsilon}}_{L^2(\rho)}^2 \leq \frac{2}{m_\omega^{h_\varepsilon}}\left(M^\gamma_\omega\norm{h - h_\varepsilon}_{L^2(\rho)}^2 + \norm{g - g_\varepsilon}^2_{H^1_{\boldalpha}(\rho)}\right)
	\end{equation}
	\end{enumerate}
\end{theorem}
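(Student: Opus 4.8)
The plan is to exploit the coercivity of the loss function (Theorem~\ref{thm_coercivity} and Remark~\ref{rmk_coercivity_h_eps}) together with the fact that $\gamma$, $\gamma_{g_\varepsilon,h}$, and $\gamma_{g_\varepsilon,h_\varepsilon}$ are the exact minimizers of the corresponding loss functionals $\mE \equiv \mE_{g,h}$, $\mE_{g_\varepsilon,h}$, and $\mE_{g_\varepsilon,h_\varepsilon}$. For part (1), I would first note that, since $h$ is the true kernel's correlation function, the loss $\mE_{g_\varepsilon,h}(\theta) = \norm{\widetilde g_\varepsilon - \theta*h}^2_{H^1_{\boldalpha}(\rho)}$ has minimizer $\gamma_{g_\varepsilon,h}$, and evaluating at the true kernel $\gamma$ gives $\mE_{g_\varepsilon,h}(\gamma) = \norm{g_\varepsilon - g}^2_{H^1_{\boldalpha}(\rho)}$ because $\gamma*h = g$. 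By minimality, $\mE_{g_\varepsilon,h}(\gamma_{g_\varepsilon,h}) \le \mE_{g_\varepsilon,h}(\gamma)$. Meanwhile, the coercivity bound—applied in the form of \eqref{eq_coercivity} but centered at $\gamma_{g_\varepsilon,h}$ rather than $\gamma$, which is legitimate since the quadratic structure $\mE_{g_\varepsilon,h}(\theta) - \mE_{g_\varepsilon,h}(\gamma_{g_\varepsilon,h}) = \norm{(\theta - \gamma_{g_\varepsilon,h})*h}^2_{H^1_{\boldalpha}(\rho)}$ and the Plancherel computation in the proof of Theorem~\ref{thm_coercivity} give $m_\omega^h\norm{\theta-\gamma_{g_\varepsilon,h}}^2_{L^2(\rho)} \le \mE_{g_\varepsilon,h}(\theta) - \mE_{g_\varepsilon,h}(\gamma_{g_\varepsilon,h})$—yields, taking $\theta = \gamma$,
\[
	m_\omega^h\norm{\gamma - \gamma_{g_\varepsilon,h}}^2_{L^2(\rho)} \le \mE_{g_\varepsilon,h}(\gamma) - \mE_{g_\varepsilon,h}(\gamma_{g_\varepsilon,h}) \le \mE_{g_\varepsilon,h}(\gamma) = \norm{g_\varepsilon - g}^2_{H^1_{\boldalpha}(\rho)}.
\]
Dividing by $m_\omega^h$ finishes part (1).

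For part (2), the strategy is a triangle-inequality split: $\norm{\gamma - \gamma_{g_\varepsilon,h_\varepsilon}}_{L^2(\rho)}^2 \le 2\norm{\gamma - \gamma_{g_\varepsilon,h}}^2_{L^2(\rho)} + 2\norm{\gamma_{g_\varepsilon,h} - \gamma_{g_\varepsilon,h_\varepsilon}}^2_{L^2(\rho)}$, and then control each piece. Actually a cleaner route, which I expect matches the factor $2/m_\omega^{h_\varepsilon}$ in the statement, is to use the coercivity in the $h_\varepsilon$-metric directly (Remark~\ref{rmk_coercivity_h_eps}): centering at $\gamma_{g_\varepsilon,h_\varepsilon}$ gives $m_\omega^{h_\varepsilon}\norm{\gamma - \gamma_{g_\varepsilon,h_\varepsilon}}^2_{L^2(\rho)} \le \mE_{g_\varepsilon,h_\varepsilon}(\gamma) - \mE_{g_\varepsilon,h_\varepsilon}(\gamma_{g_\varepsilon,h_\varepsilon}) \le \mE_{g_\varepsilon,h_\varepsilon}(\gamma) = \norm{g_\varepsilon - \gamma*h_\varepsilon}^2_{H^1_{\boldalpha}(\rho)}$. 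Now insert $\pm\, g$ and $\pm\, \gamma*h$: since $g = \gamma*h$, the quantity $g_\varepsilon - \gamma*h_\varepsilon = (g_\varepsilon - g) + (\gamma*h - \gamma*h_\varepsilon) = (g_\varepsilon - g) - \gamma*(h_\varepsilon - h)$, so by $\norm{a+b}^2 \le 2\norm a^2 + 2\norm b^2$,
\[
	m_\omega^{h_\varepsilon}\norm{\gamma - \gamma_{g_\varepsilon,h_\varepsilon}}^2_{L^2(\rho)} \le 2\norm{g_\varepsilon - g}^2_{H^1_{\boldalpha}(\rho)} + 2\norm{\gamma*(h - h_\varepsilon)}^2_{H^1_{\boldalpha}(\rho)}.
\]
The final step is to bound $\norm{\gamma*(h-h_\varepsilon)}^2_{H^1_{\boldalpha}(\rho)}$ by $M^\gamma_\omega\norm{h - h_\varepsilon}^2_{L^2(\rho)}$, which is exactly the same Plancherel argument as in Theorem~\ref{thm_coercivity} but with the roles of $\gamma$ and the error function swapped: write the $H^1_{\boldalpha}(\rho)$ norm of the convolution as $\frac{1}{2\pi}\int (\alpha_1|\widehat\gamma\,\widehat{(h-h_\varepsilon)}|^2 + \alpha_2|z\widehat\gamma\,\widehat{(h-h_\varepsilon)}|^2)\,d\tau$, factor out the supremum $M^\gamma_\omega$ of $\alpha_1|\widehat\gamma(z)|^2 + \alpha_2|z\widehat\gamma(z)|^2$, and apply Plancherel to what remains. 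Dividing by $m_\omega^{h_\varepsilon}$ gives the claimed inequality.

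The main obstacle I anticipate is not any single hard estimate but rather being careful about the minimizer-centering step: the coercivity statement as literally written in Theorem~\ref{thm_coercivity} is centered at the true $\gamma$ with loss $\mE = \mE_{g,h}$, whereas here I need the analogous two-sided bound for $\mE_{g_\varepsilon,h}$ and $\mE_{g_\varepsilon,h_\varepsilon}$ centered at their own minimizers. This requires observing that for any data pair $(\tilde g, \tilde h)$ with a minimizer $\gamma_{\tilde g,\tilde h}$, the difference $\mE_{\tilde g,\tilde h}(\theta) - \mE_{\tilde g,\tilde h}(\gamma_{\tilde g,\tilde h})$ equals $\norm{(\theta - \gamma_{\tilde g,\tilde h})*\tilde h}^2_{H^1_{\boldalpha}(\rho)}$ exactly (the cross term vanishes by the normal equations / first-order optimality), after which the identical Plancherel/Assumption~\ref{asmp_omega_bdd} argument applies verbatim. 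One should also check that all the functions appearing ($\gamma*(h-h_\varepsilon)$, etc.) are of exponential type so that Lemma~\ref{lem_Plancherel} is applicable—this follows from Assumption~\ref{asmp_exponential_dacay} since convolutions of exponential-type functions remain exponential type. Everything else is routine.
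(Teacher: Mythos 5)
Your proof is correct. Part (1) is essentially the paper's argument: evaluate $\mE_{g,h}(\gamma_{g_\varepsilon,h})$, observe that it reduces to $\norm{g-g_\varepsilon}^2_{H^1_{\boldalpha}(\rho)}$, and apply the coercivity of Theorem~\ref{thm_coercivity}. (The paper asserts that $\gamma_{g_\varepsilon,h}*h = g_\varepsilon$ exactly, i.e., the minimizer achieves zero loss, and then computes $\mE_{g,h}(\gamma_{g_\varepsilon,h})$ directly; your quadratic-centering route via the normal equations gets the same inequality without needing the zero-loss assumption, which is a small improvement in rigor.)

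Part (2) is where your approach genuinely diverges from the paper's, and both are valid. The paper introduces the intermediate estimator $\gamma_{g,h_\varepsilon}$ and splits the $L^2(\rho)$ error by triangle inequality,
\begin{equation}
\norm{\gamma - \gamma_{g_\varepsilon,h_\varepsilon}}_{L^2(\rho)}^2 \le 2\norm{\gamma - \gamma_{g,h_\varepsilon}}_{L^2(\rho)}^2 + 2\norm{\gamma_{g,h_\varepsilon} - \gamma_{g_\varepsilon,h_\varepsilon}}_{L^2(\rho)}^2,
\end{equation}
then bounds each piece with a separate coercivity application: the second piece is controlled as in part (1) by $\norm{g-g_\varepsilon}^2_{H^1_{\boldalpha}(\rho)}/m_\omega^{h_\varepsilon}$, and the first piece is controlled by computing $\mE_{g,h_\varepsilon}(\gamma) = \norm{\gamma*(h-h_\varepsilon)}^2_{H^1_{\boldalpha}(\rho)} \le M_\omega^\gamma\norm{h-h_\varepsilon}^2_{L^2(\rho)}$ and invoking Remark~\ref{rmk_coercivity_h_eps}. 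You instead apply coercivity once, directly to $\mE_{g_\varepsilon,h_\varepsilon}$ at $\theta=\gamma$, and split the \emph{residual} $g_\varepsilon - \gamma*h_\varepsilon = (g_\varepsilon - g) + \gamma*(h-h_\varepsilon)$ rather than the estimator error. This is a cleaner bookkeeping: it avoids defining the auxiliary $\gamma_{g,h_\varepsilon}$ and uses the parallelogram-type inequality in $H^1_{\boldalpha}(\rho)$ rather than in $L^2(\rho)$, arriving at the identical constants. The crucial analytic ingredients — the two-sided coercivity from Plancherel, and the operator-norm bound $\norm{\gamma*(h-h_\varepsilon)}^2_{H^1_{\boldalpha}(\rho)} \le M_\omega^\gamma\norm{h-h_\varepsilon}^2_{L^2(\rho)}$ obtained by swapping roles in the Theorem~\ref{thm_coercivity} argument — are the same in both proofs, so there is no real gain in generality, only in economy.
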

\begin{proof}
	1. In the first case, since $\gamma_{g_\varepsilon, h}$ satisfies
	\begin{equation}
		g_\varepsilon(t) = (\gamma_{g_\varepsilon, h}*h)(t)	
	\end{equation}
	we have
	\begin{align*}
		\mE(\gamma_{g_\varepsilon, h}) 
		&= \alpha_1\int_{0}^{+\infty} \abs{g(t) - (\gamma_{g_\varepsilon, h}*h)(t)}^2 \rho(t) dt + \alpha_2\int_{0}^{+\infty} \abs{g'(t) - (\gamma_{g_\varepsilon, h}*h)'(t)}^2 \rho(t) dt\\
		&= \alpha_1\int_{0}^{+\infty} \abs{g(t) - g_\varepsilon(t)}^2 \rho(t) dt + \alpha_2\int_{0}^{+\infty} \abs{g'(t) - g_\varepsilon'(t)}^2 \rho(t) dt = \norm{g - g_\varepsilon}^2_{H^1_{\boldalpha}(\rho)}
	\end{align*}
	And by the coercivity condition \eqref{eq_coercivity}, 
	\begin{equation}
		\norm{\gamma - \gamma_{g_\varepsilon, h}}^2_{L^2(\rho)} \leq \frac{1}{m_\omega^h}\mE(\gamma_{g_\varepsilon, h}) = \frac{1}{m_\omega^h}\norm{g - g_\varepsilon}_{H^1(\rho)}^2
	\end{equation}
	
	2. In the second case, since
	\begin{equation}
		\norm{\gamma - \gamma_{g_\varepsilon, h_\varepsilon}}_{L^2(\rho)}^2 \leq 2\norm{\gamma - \gamma_{g, h_\varepsilon}}_{L^2(\rho)}^2 + 2\norm{\gamma_{g, h_\varepsilon} - \gamma_{g_\varepsilon, h_\varepsilon}}_{L^2(\rho)}^2
	\end{equation} 
	For the second term, we recall the bounded condition for $h_\varepsilon$ in Assumption \ref{asmp_omega_bdd} and the coercivity condition \eqref{eq_coercivity_error} in Remark \ref{rmk_coercivity_h_eps}. A similar step as in the previous case implies the following estimate.
	\begin{equation}
		\norm{\gamma_{g, h_\varepsilon} - \gamma_{g_\varepsilon, h_\varepsilon}}_{L^2(\rho)}^2 \leq 
		\frac{1}{m^{h_\varepsilon}_\omega}
		\mE_{g, h_\varepsilon}(\gamma_{g_\varepsilon, h_\varepsilon}) 
        = 
        \frac{1}{m^{h_\varepsilon}_\omega}\norm{g - g_\varepsilon}_{H^1_{\boldalpha}(\rho)}^2
	\end{equation}
	For the first term, by the coercivity condition \eqref{eq_coercivity},
	\begin{equation}
		\norm{\gamma - \gamma_{g, h_\varepsilon}}_{L^2(\rho)}^2 \leq \frac{1}{m^{h_\varepsilon}_\omega}\mE_{g, h_\varepsilon}(\gamma)
	\end{equation}
    For the second term, notice that 	
	\begin{align*}
		\mE_{g, h_\varepsilon}(\gamma) 
		&= \alpha_1\int_{0}^{+\infty} \abs{g(t) - (\gamma*h_\varepsilon)(t)}^2 \rho(t) dt + \alpha_2\int_{0}^{+\infty} \abs{g'(t) - (\gamma*h_\varepsilon)'(t)}^2 \rho(t) dt\\
		&= \alpha_1\int_{0}^{+\infty} \abs{(h - h_\varepsilon)*\gamma}^2 \rho(t) dt + \alpha_2\int_{0}^{+\infty} \abs{((h - h_\varepsilon)*\gamma)'}^2 \rho(t) dt
	\end{align*}
	Because of the bounds for $\gamma$ in Assumption \ref{asmp_omega_bdd}, taking $h - h_\varepsilon$ as $f$ and $\gamma$ as $h$ in the proof of Theorem \ref{thm_coercivity}, we can have a similar upper bound as in \eqref{eq_coercivity}, namely
	\begin{equation}
		\mE_{g, h_\varepsilon}(\gamma) \leq M^\gamma_\omega
		\norm{h - h_\varepsilon}^2_{L^2(\rho)}
	\end{equation}
	And then using the coercivity condition \eqref{eq_coercivity_error} in Remark \ref{rmk_coercivity_h_eps},
	\begin{equation}
		\norm{\gamma - \gamma_{g, h_\varepsilon}}_{L^2(\rho)}^2 \leq \frac{M^\gamma_\omega}{m_\omega^{h_\varepsilon}}\norm{h - h_\varepsilon}^2_{L^2(\rho)}
	\end{equation}
	Finally,
	\begin{equation}
		\norm{\gamma - \gamma_{g_\varepsilon, h_\varepsilon}}_{L^2(\rho)}^2 \leq \frac{2}{m_\omega^{h_\varepsilon}}\left(M^\gamma_\omega\norm{h - h_\varepsilon}_{L^2(\rho)}^2 + \norm{g - g_\varepsilon}^2_{H^1_{\boldalpha}(\rho)}\right). 
	\end{equation}
\end{proof}

Notice that the estimation error $\norm{\gamma - \gamma_{g_\varepsilon, h_\varepsilon}}_{L^2(\rho)}$ scales linearly with the estimation error in $\norm{g - g_\varepsilon}_{H^1(\rho)}^2$ and $\norm{h - h_\varepsilon}_{L^2(\rho)}^2$, but the constant is inversely proportional to $m_\omega^{h_\varepsilon}$. To achieve a better convergence, we have to make sure $m_\omega^{h_\varepsilon}$ is away from 0. Such a condition can be derived from Prony estimations, i.e. using $\widetilde{h}$ as the estimated $h_\varepsilon$. To emphasize this advantage, we compare the current loss function with two other choices of loss functions.

\subsection{Ill-posed loss functions}
It is natural to consider the loss function directly based on the Volterra equation \eqref{eq_main_volterra}, 
\begin{equation}\label{eq_loss_space_old_old_old}
		\mE_1(\theta) = \int_{0}^{+\infty} \abs{g(t) - \int_0^t \theta(t-s)h(s)ds}^2 \rho(t) dt = \int_{0}^{+\infty} \abs{(\gamma - \theta)*h}^2 \rho(t) dt
\end{equation}
Since the second kind Volterra equation is well-posed, we also construct the following loss function based on \eqref{eq_Volterra_second_kind},
\begin{equation}\label{eq_loss_derivative}
		\mE_2(\theta) = \int_{0}^{+\infty} \abs{g'(t) - \left(\int_0^t \theta(t-s)h'(s)ds  - h(0)\theta(t)\right)}^2 \rho(t) dt = \int_{0}^{+\infty} \abs{((\gamma - \theta)*h)'}^2 \rho(t) dt
\end{equation}
The corresponding estimators using loss functions $\mE_1$ and $\mE_2$ are denoted as $\theta_1$ and $\theta_2$. As we have demonstrated in the proof of the Theorem \ref{thm_main_error_conv}, the crucial estimate for the loss function to be well-posed is when $h$ is replaced by the estimated form $h_\varepsilon$, the following coercivity condition holds,
\begin{equation}
	\mE_1(\theta) \geq m_1 \norm{\theta - \gamma}^2_{L^2(\rho)}, \ \mE_2(\theta) \geq m_2 \norm{\theta - \gamma}^2_{L^2(\rho)}
\end{equation}
for some constants $m_1, m_2 >0$. However, we show that this cannot be achieved for $\mE_1$, and for $\mE_2$, the constant $m_2$ scales badly with $\omega$. 
\begin{proposition}\label{prop_mE1_mE2}
	Suppose $\mE_1$ and $\mE_2$ are defined as above, and $h$ is approximated by a Prony series of the form $h_\varepsilon(t) = \sum_{k = 1}^p \bw_k e^{\boldlambda_k t}$ with $\boldlambda_k < -\sigma$, $k = 1, \dots, p$. Then such $m_1>0$ does not exist and $m_2 = O(\omega^2)$ as $\omega \to 0$.  
\end{proposition}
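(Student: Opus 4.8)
The plan is to argue entirely in the Laplace/frequency domain, exactly as in the proof of Theorem~\ref{thm_coercivity}, and to exhibit the failure of coercivity by computing the relevant ``symbols'' explicitly on the vertical line $z = \omega + i\tau$. Recall that for a perturbation $f = \theta - \gamma$ of exponential type, Lemma~\ref{lem_Plancherel} gives
\begin{equation}
\mE_1(\theta) = \frac{1}{2\pi}\int_{-\infty}^{+\infty}\abs{\widehat{h_\varepsilon}(z)}^2\abs{\widehat f(z)}^2\,d\tau,
\qquad
\mE_2(\theta) = \frac{1}{2\pi}\int_{-\infty}^{+\infty}\abs{z}^2\abs{\widehat{h_\varepsilon}(z)}^2\abs{\widehat f(z)}^2\,d\tau,
\end{equation}
while $\norm{\theta - \gamma}_{L^2(\rho)}^2 = \frac{1}{2\pi}\int \abs{\widehat f(z)}^2 d\tau$. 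So the coercivity constants are exactly $m_1 = \inf_{\tau\in\R}\abs{\widehat{h_\varepsilon}(\omega + i\tau)}^2$ and $m_2 = \inf_{\tau\in\R}\abs{\omega + i\tau}^2\abs{\widehat{h_\varepsilon}(\omega + i\tau)}^2$, in the sense that these are the best constants for which the stated inequalities hold over all admissible $f$ (one approximates the indicator of a small $\tau$-window by $\abs{\widehat f}^2$, legitimate since $\widehat f$ ranges over a dense enough class of Laplace transforms of exponential-type functions). Thus the whole proposition reduces to estimating these two infima for $\widehat{h_\varepsilon}(z) = \sum_{k=1}^p \frac{\bw_k}{z - \boldlambda_k}$.

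For $m_1$: the rational function $\widehat{h_\varepsilon}(z)$ has numerator of degree $\le p-1$ and denominator of degree $p$, so $\widehat{h_\varepsilon}(\omega + i\tau) \to 0$ as $\abs{\tau}\to\infty$ (it decays like $1/\abs{\tau}$). Hence $\inf_{\tau}\abs{\widehat{h_\varepsilon}(\omega+i\tau)}^2 = 0$, and no $m_1 > 0$ can exist. This is the rigorous content of ``the zero spectrum of the first-kind convolution operator'': the symbol vanishes at infinite frequency. I would spell this out by noting $\abs{\widehat{h_\varepsilon}(\omega+i\tau)} \le \frac{\sum_k \abs{\bw_k}}{\abs{\tau} - \max_k\abs{\boldlambda_k}} \to 0$.

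For $m_2$: now the symbol is $\abs{z}\,\abs{\widehat{h_\varepsilon}(z)}$, and since $\abs{z\widehat{h_\varepsilon}(z)} \to \abs{\sum_k \bw_k}$ as $\abs{\tau}\to\infty$ (the ratio of the two leading terms), the frequency-at-infinity degeneracy is cured — provided $\sum_k \bw_k \ne 0$, which holds since $\sum_k \bw_k = \lim_{z\to\infty} z\widehat{h_\varepsilon}(z) = h_\varepsilon(0) \ne 0$. So the infimum is now governed by the behavior near $\tau = 0$ (and near any zeros of $z\widehat{h_\varepsilon}(z)$): at $z = \omega$, we have $\abs{z}^2 = \omega^2$, and $\abs{\widehat{h_\varepsilon}(\omega)}^2 \to \abs{\widehat{h_\varepsilon}(0)}^2 = \bigl(\sum_k \bw_k/(-\boldlambda_k)\bigr)^2$, a finite nonzero constant as $\omega\to 0$ (using $\boldlambda_k < -\sigma < 0$, so $z=0$ is not a pole). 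Therefore $m_2 \le \omega^2\abs{\widehat{h_\varepsilon}(\omega)}^2 = O(\omega^2)$; combined with the lower bound $m_2 \ge \omega^2 \inf_\tau \abs{\widehat{h_\varepsilon}(\omega+i\tau)}^2$ — wait, that lower bound is $0$, so to get the matching lower order one instead writes $m_2 = \inf_\tau (\omega^2 + \tau^2)\abs{\widehat{h_\varepsilon}(\omega+i\tau)}^2$ and checks this infimum is attained at a bounded $\tau$ and stays bounded below by a constant times $\omega^2$ only if $z\widehat{h_\varepsilon}(z)$ has no zero on the imaginary axis; more simply, the proposition only claims $m_2 = O(\omega^2)$, i.e. the \emph{upper} bound $m_2 \le \omega^2 \abs{\widehat{h_\varepsilon}(\omega)}^2$, which is what the evaluation at $\tau=0$ gives immediately.

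The main obstacle, and the place requiring care, is the reduction from ``coercivity inequality over all candidate $\theta$'' to ``infimum of the symbol over $\tau$'': one must justify that $\abs{\widehat f}^2$ can be made to concentrate near the minimizing frequency within the class of Laplace transforms of exponential-type functions $f$ (equivalently, that the best coercivity constant genuinely equals the essential infimum of the symbol). This is a density/approximation argument — e.g. take $f$ whose Laplace transform approximates a narrow bump around $\tau_0$, realizable since such $\widehat f$ can be taken rational with poles strictly left of $-\sigma$ — and it is the only non-routine ingredient; the symbol computations themselves are elementary partial-fraction bookkeeping. I would also remark that the hypothesis $\boldlambda_k < -\sigma$ is used twice: to ensure $h_\varepsilon$ genuinely has exponential type $\sigma$ (so Lemma~\ref{lem_Plancherel} applies with this $\omega$-range down to $\omega = 0$), and to ensure $\widehat{h_\varepsilon}$ is analytic across the imaginary axis so that the $\omega\to 0$ limits above are finite.
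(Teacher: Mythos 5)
Your argument follows the same route as the paper: pass to the Laplace domain via Lemma~\ref{lem_Plancherel}, so that $\mE_1$ and $\mE_2$ become weighted integrals of $\abs{\widehat f}^2$ against the multipliers $\abs{\widehat{h_\varepsilon}(z)}^2$ and $\abs{z\widehat{h_\varepsilon}(z)}^2$ respectively, and then read off that the first multiplier vanishes as $\abs{\tau}\to\infty$ (so no $m_1>0$ exists) while the second vanishes as $z\to 0$ (so $m_2=O(\omega^2)$). The one place you go beyond the paper is in explicitly flagging that equating the best coercivity constant with $\inf_\tau$ of the multiplier needs a concentration argument within the admissible class of exponential-type perturbations $f$ --- the paper leaves this step implicit --- and in noting that only the upper bound $m_2\le\omega^2\abs{\widehat{h_\varepsilon}(\omega)}^2$, read off at $\tau=0$, is what the $O(\omega^2)$ claim asserts; both are correct and useful clarifications rather than a different method.
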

\begin{proof}
	By the Plancherel Theorem \ref{lem_Plancherel} and similar steps in the proof of Theorem \ref{thm_coercivity}, we have 
\begin{align}
		\mE_1(\theta) = \frac{1}{2\pi}\int_{-\infty}^{+\infty} \abs{\widehat{h_\varepsilon}(z)}^2\abs{\widehat f(z)}^2 d \tau, \ \ \ \ 
		\mE_2(\theta) = \frac{1}{2\pi}\int_{-\infty}^{+\infty} \abs{z\widehat{h_\varepsilon}(z)}^2\abs{\widehat f(z)}^2 d \tau
\end{align}
where $z = \omega + i\tau$ and $f = \theta - \gamma$. Based on the given form of $h_\varepsilon$, we have $\widehat{h_\varepsilon}(z) = \sum_{k = 1}^p \frac{\bw_k}{z - \boldlambda_k}$. It is clear that 
\begin{equation}\label{eq_mE1_limit_0}
	\lim_{\tau \to \infty} \abs{\widehat{h_\varepsilon}(\omega + i\tau)} = 0.
\end{equation}
hence, a positive lower bound $m_1$ does not exist. And in the second case, $z\widehat{h_\varepsilon}(z) = \sum_{k = 1}^p \frac{z\bw_k}{z - \boldlambda_k}$, which is not $0$ when $\tau \to \infty$. It is possible that there exists $m_2>0$. However, 
\begin{equation}\label{eq_mE2_limit_0}
	\lim_{z \to 0} \abs{z\widehat{h_\varepsilon}(z)} = 0,
\end{equation}
hence $m_2 = \inf_{z = \omega + i\tau, \tau \in \R}\abs{z\widehat{h_\varepsilon}(z)}^2 \sim O(\omega^2)$ when $\omega \to 0$. 
\end{proof}
The fact that $m_1$ does not exist prevents us from using the loss function $\mE_1$. And even though $m_2$ exists when $\omega >0$, it scales badly when $\omega$ is small. Since the density for $\rho$ is $e^{-2\omega t}$, our confidence about the estimated kernel $\theta$ is constrained near the origin when $\omega$ is large. Therefore $\mE_2$ is not able to provide a reasonable performance guarantee for the value of the estimated kernel away from the origin.

From the observations of $\mE_1$ and $\mE_2$, the proposed loss function $\mE$ is constructed as their linear combination,
\begin{equation}
	\mE(\theta) = \alpha_1\mE_1(\theta) + \alpha_2\mE_2(\theta)
\end{equation}
And for $\mE$, the minimum in Assumption \ref{asmp_exponential_dacay} is achievable, since the function $\abs{z\widehat{h_\varepsilon}(z)}^2 + \abs{\widehat{h_\varepsilon}(z)}^2$ does not have zeros at the origin, nor infinity.  We will show this fact using numerical examples in the next section.

\begin{remark}\label{rmk_alpha}
	The above construction of $\mE$ gives us some insight into selecting the optimal parameter $\boldalpha$. Because of equation \eqref{eq_mE1_limit_0} and \eqref{eq_mE2_limit_0}, and notice that 
	\[\alpha_1' := \lim_{z \to \infty}z\widehat{h_\varepsilon}(z) = \sum_{k = 1}^p \bw_k = h_\varepsilon(0), \ \ \alpha_2' := \lim_{z \to 0}\widehat{h_\varepsilon}(z) = \sum_{k = 1}^p \frac{\bw_k}{-\boldlambda_k} = \int_0^\infty h_\varepsilon(t)dt\]
	Therefore in practice, we let $\boldalpha = (\alpha_1, \alpha_2) = \frac{1}{\alpha_1'^2 + \alpha_2'^2}(\alpha_1', \alpha_2')$. Such an arrangement makes 
	\[\lim_{z \to \infty}\alpha_1 \abs{\widehat{h_\varepsilon}(z)}^2 + \alpha_2 \abs{z\widehat{h_\varepsilon}(z)}^2= \lim_{z \to 0}\alpha_1 \abs{\widehat{h_\varepsilon}(z)}^2 + \alpha_2 \abs{z\widehat{h_\varepsilon}(z)}^2 = \frac{\alpha_1'\alpha_2'}{\alpha_1'^2 + \alpha_2'^2}, \]
	so that optimally balances the value of the above function at 0 and $\infty$. However, there is no guarantee that the minimum of $\alpha_1 \abs{\widehat{h_\varepsilon}(z)}^2 + \alpha_2 \abs{z\widehat{h_\varepsilon}(z)}^2$ is achieved at 0 and $\infty$, but a strictly positive minimum is possible to exist. 
\end{remark}

\section{Numerical Results}\label{sec_numerics}
We justify our analysis and examine the performance of our algorithm in the following examples. First, we give a typical learning result with the force $F$ equal to 0, demonstrating the algorithm and comparing the prediction error on the correlation functions. We also compare the performance with the proposed loss function $\mE$ and the ill-posed ones $\mE_1$ and $\mE_2$. Next, we give the convergence of estimation error, both theoretical and empirical, with respect to the observation noise on the trajectory and the changing decaying speed of the measure $\rho$. Finally, we present the case of $F$ not being 0 and an example of chaotic dynamics with an extra drift term. In the last two cases, the solution $v$ is not ergodic. Hence, the ensemble trajectory data becomes necessary. 
\subsection{Typical learning result when $F = 0$}
We take the true memory kernel to be $\gamma(t) = \sum_{k = 1}^{5}\bu_k e^{\boldeta_k t}$, with 

$\bu = [0.3488, 0.3488, 0.3615, 0.5300, 0.3045]$ and $\boldeta =  [-0.1631 - 0.3211i, -0.1631 + 0.3211i, -0.8262 + 0.0000i, -0.9178 + 0.0000i, -0.3352 + 0.0000i]$ sampled from random and ensured exponential decay. The correlated noise is generated using spectral representation \cite{shinozukaSimulationStochasticProcesses1991}, with accelerated convergence \cite{huSimulationStochasticProcesses1997}. The spectral grid we used is $N_w = 10000$, $\Delta w = \pi/100$, the length of the time grid to be $L = 2^{16}$ with the inverse temperature $\beta = 1$. Hence the resulting correlated noise $R(t)$ is evaluated on a grid with $\Delta t = \frac{\pi}{N_w\Delta w} = 0.01$ and an artificial period $T_0 = \frac{2\pi}{\Delta w} = 200$, denoted as $\bR_l = R(\bt_l)$, where $\bt_l = l {\Delta t}, l = 0, 1, \dots, L$. The trajectory is generated using the Euler scheme, with $\bv_0 \sim \mN(0, 1)$ and 
\begin{equation}
	\bv_{l+1} = \bv_{l} + \Delta t\left[F(\bv_l) - \sum_{j = 0}^l\gamma({\bt}_{l-j})\bv_j + \bR_l\right]
\end{equation}
We call $\bv$ the true trajectory, and the noisy observation of the trajectory is given by
\begin{equation}
	\widetilde\bv_l = \bv_{lr} + \boldsymbol{\xi}_l
\end{equation}
where $r = 70$ is the observation ratio, and $\boldxi_n$ are i.i.d Gaussian random noise with mean 0 and standard deviation $\sigma_{obs} = 0.1$. We also assume a shorter total observation length of $\widetilde L = L/(2r)$.

\begin{figure}[tbhp]
  \centering
  \subfloat{\label{fig:F_0_kernel}\includegraphics[width=0.33\textwidth]{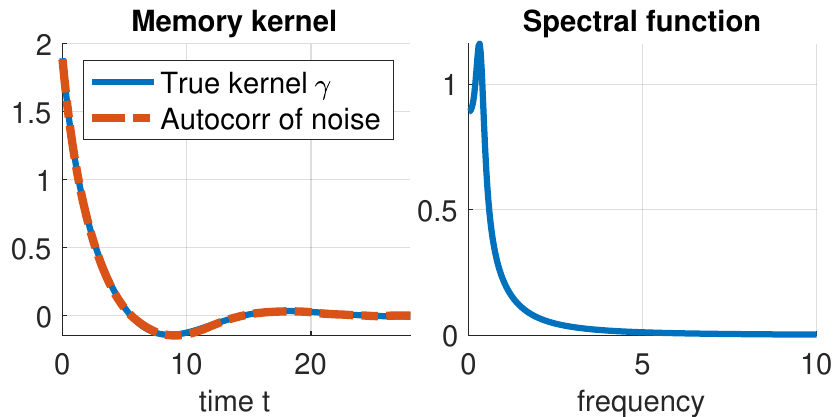}}
  \subfloat{\label{fig:F_0_noise}\includegraphics[width=0.66\textwidth]{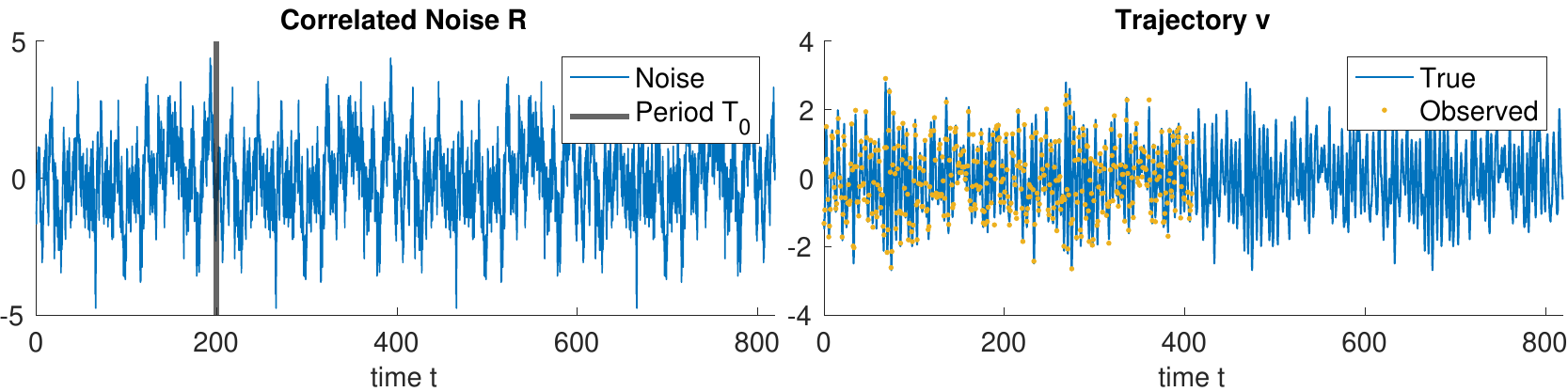}}\\
  \subfloat{\label{fig:F_0_traj}\includegraphics[width=1\textwidth]{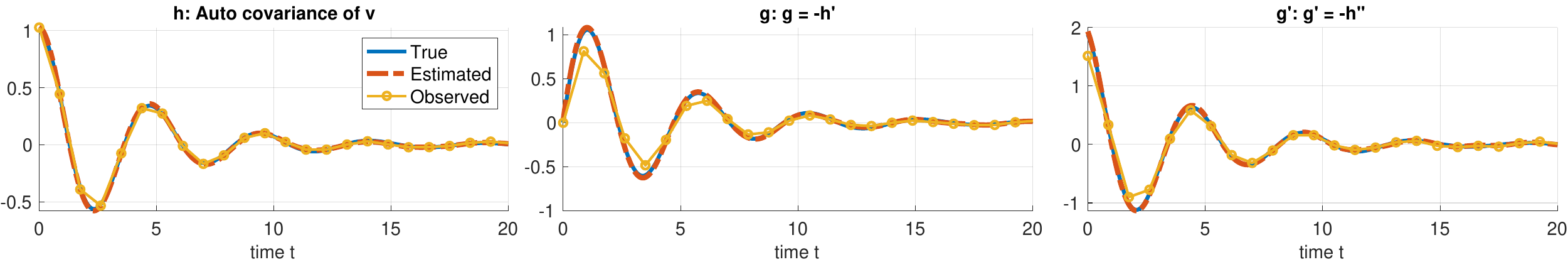}}
  \caption{Memory kernel, trajectory, and correlation functions. \textbf{First row:} Panel 1: Memory kernel $\gamma$, and the autocorrelation of simulated noise to verify the fluctuation-dissipation condition \eqref{eq_fluctuation_dissipation}. Panel 2: Spectral function of $\gamma$. Panel 3: Trajectory of correlated noise $R$. Panel 4: Trajectory $v$ and a discrete and noisy observation. \textbf{Second row:} Estimation result of correlation functions $h$, $g$ and $g'$. In the case of $F\equiv 0$, we have $g = h'$. Taking derivatives on the estimated $h$ (named "Estimated $g$" and "Estimated $g'$") has better performance of $g$ and $g'$ than estimating from the finite difference of the observed trajectory $\widetilde{\bv}$ (named "Observed $g$" and "Observed $g'$"). \\}
  \label{fig:F_0}
\end{figure}

%
%

The true kernel is shown in the first panel of Figure \ref{fig:F_0}, together with the estimated autocorrelation of the noise sequence, as a way to check the quality of the simulated $\bR$. The spectral function, which is the Fourier transform of the memory kernel $\gamma$, is presented in the second panel for later comparison of prediction error. The trajectory and the noise sequence are shown on the right of the first row in Figure \ref{fig:F_0}. 

The estimation result of correlation functions is presented in the second row of Figure \ref{fig:F_0}.  In the case of $F = 0$, we have $g = -h'$. Note that the true $h$ can be derived from Remark \ref{rmk_gamma_h} with 6 Prony modes. From the observed trajectory, we estimate $\{\bh_n\}_{n = 1}^N$ with $N = 24$ and $p = 10$, which is different from the true number of modes 6 to show the robustness of the regularized Prony method. 

Note that the observed $\bh_n$ is close to the true $h$, which is estimated from the full trajectory. However, estimating the derivatives of $h$ using the finite difference of $\bh_n$ is problematic, especially for the region where the function varies rapidly, as shown in the middle and the right panel of the second row in Figure \ref{fig:F_0}. We point out that the Prony estimation $\widetilde{h}$ is close to both $\bh_n$ and true $h$, and taking the explicit derivatives of the Prony estimation $\widetilde{h}$ provides us a better estimation of its derivatives. 

For later comparison of the theoretical upper bound, we take $\omega = 0.05$ and then \reva{$\rho(t) = e^{-0.1t}$}. The scale parameter $\boldalpha = (0.9030, 0.0970)$, which derived from Remark \ref{rmk_alpha}. In this setting, the squared estimation error of $h$ and $g$ are $\|h - \widetilde h\|_{L^2(\rho)}^2 = 1.12\times 10^{-3}$ and $\|g - \widetilde g\|_{H^1_{\boldalpha}(\rho)}^2 =  2.37\times 10^{-3}$.

We estimate the memory kernel using the proposed estimator $\theta$, which minimizes the Sobolev loss function $\mE$, and compare the result with estimators $\theta_1$ which minimizes $\mE_1$, $\theta_2$ which minimizes $\mE_2$, and $\theta_L$ which is derived explicitly by inverse Laplace transform. The basis we used here is cubic spline functions over the interval $[0, 30]$ with 30 knots. The estimation result of the memory kernel is shown in Figure \ref{fig:F_0_kernel_est}. The estimators $\theta_1$ and $\theta_2$ are worse than our proposed estimator $\theta$ which using $\mE$, and the performance of $\theta_L$ is comparable to $\theta$. 

\begin{figure}[h]
		\centering
		\includegraphics[width=0.75\textwidth]{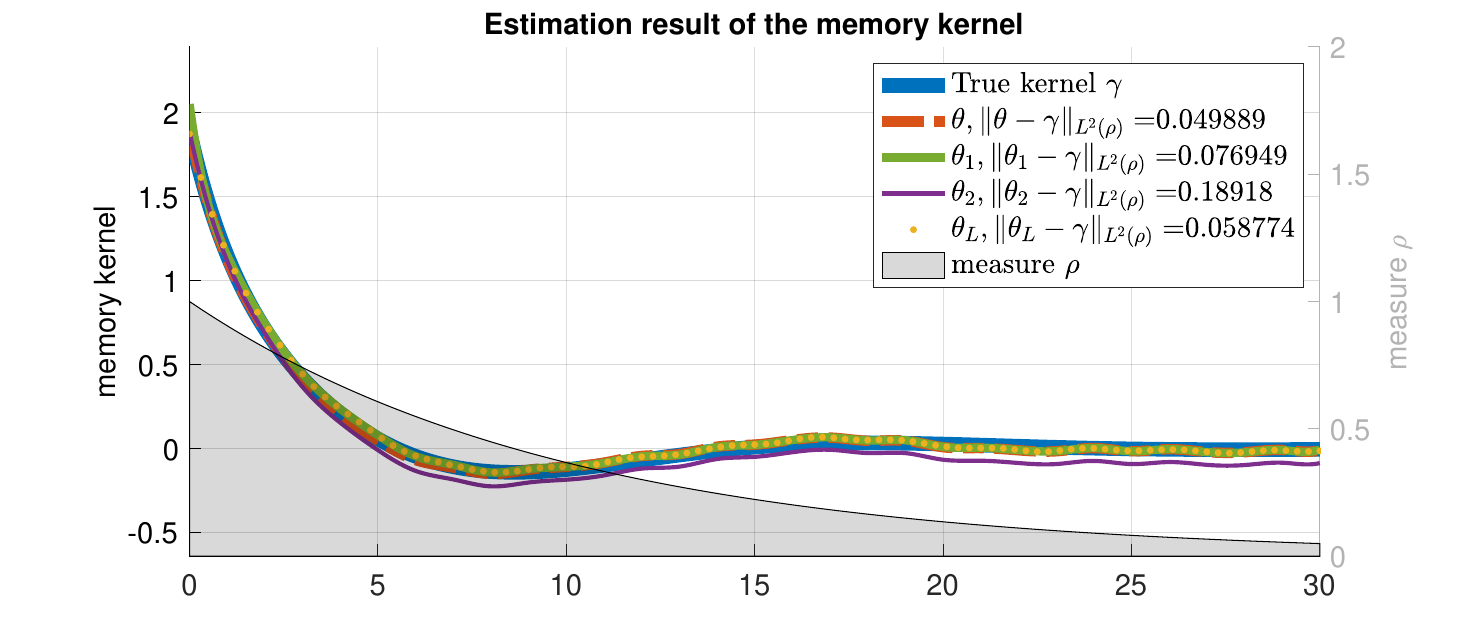}
		\caption{Estimation result of the memory kernel. The proposed estimator $\theta$, which minimizes the Sobolev loss function, has the best performance, compared with the minimizer $\theta_1$, $\theta_2$ of ill-posed loss functions $\mE_1$ and $\mE_2$, and the explicit inverse Laplace transform estimator $\theta_L$.}
		\label{fig:F_0_kernel_est}
\end{figure}

Now we give a numerical comparison of the upper and lower bounds $M_\omega^\gamma$, $m_\omega^{h_\varepsilon}$, $m_1$, $m_2$, as they are crucial in estimating the coercivity constant in Theorem \ref{thm_coercivity} and Proposition \ref{prop_mE1_mE2}. The result is presented in the first panel of Figure \ref{fig:F_0_coer_pred_err}. We fix the $\omega = 0.05$ as above and examine the values of the above functions on the grid $z = \omega + i\tau$, where $\tau$ is exponentially evenly distributed from $10^{-5}$ to $10^{5}$, and plot the functions $\alpha_1|\widehat{\gamma(z)}|^2+ \alpha_2 |z\widehat{\gamma(z)}|^2$,
$|\widehat{h_\varepsilon}(z)|^2$, $|z\widehat{h_\varepsilon}(z)|^2$ and $\alpha_1|\widehat{h_\varepsilon}(z)|^2+ \alpha_2 |z\widehat{h_\varepsilon}(z)|^2$ to locate the upper and lower bounds. Note that  $|\widehat{h_\varepsilon}(z)|^2$ goes to 0 as $\tau$ increases, so its minimus $m_1 = 0$, showing the ill-posedness of $\mE_1$. Recall that $m_2=O(\omega^2)$ when $\omega$ goes to 0 and $|z\widehat{h_\varepsilon}(z)|^2$ achieves minimum $m_2$ for small $\tau$, hence $\mE_2$ is badly conditioned for small $\omega$.  By our construction of the loss function $\mE$ and the selection of the coefficient $\boldalpha$ as in Remark \ref{rmk_alpha}, the minimum of $\alpha_1|\widehat{h_\varepsilon}(z)|^2+ \alpha_2 |z\widehat{h_\varepsilon}(z)|^2$ is not achieved at 0 nor $\infty$, whereas its value at these two extreme points are balanced. The actual minimum $m_\omega^{h_\varepsilon} = 8.72\times 10^{-2}$ is located at the red star, and the theoretical lower bound of the estimation error $\|\theta - \gamma\|_{L^2(\rho)}^2$ in Theorem \ref{thm_coercivity} is 0.316, which controls the estimation error of $\theta$ in practice as shown in the Figure \ref{fig:F_0_coer_pred_err}.

Finally, we will provide a direct comparison of the prediction error. We use the learned kernel $\theta$ and $\theta_L$ to generate a new trajectory and compare the autocorrelation function $h$. The results are presented in the second and third panels of Figure \ref{fig:F_0_coer_pred_err}. Both $\theta$ and $\theta_L$ reconstruct the autocorrelation accurately, but the spectral function of $\theta_L$ contains false phase changes, which appear because of the ill-posedness of the inverse Laplace transform.

\begin{figure}[ht]
\includegraphics[width=\linewidth]{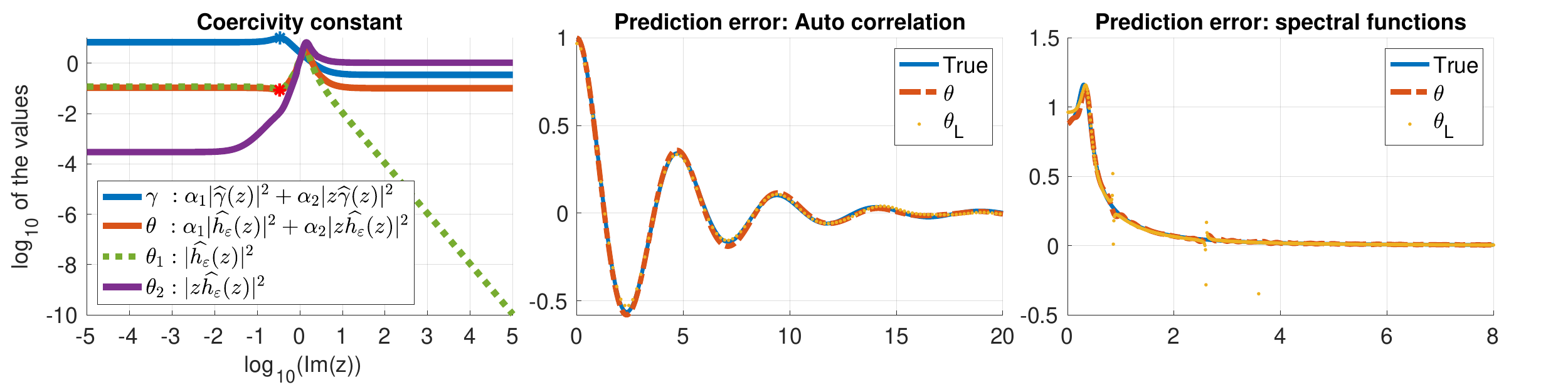}
\caption{Coercivity constants and prediction error. \textbf{Panel 1:} Locate the upper and lower bounds for coercivity constants. The lower bound $m_1$, corresponding to the estimator $\theta_1$, equals to 0. The lower bound $m_2$, corresponding to the estimator $\theta_2$, is achieved when $\text{Im}(z) = 0$, and has order $O(\omega^2)$ as $\omega \to 0$. The lower bound $m_\omega^{h_\varepsilon	}$, corresponding to $\theta$,  is achieved at the red star, whereas the value at extreme points 0 and $\infty$ are bounded by the selection of $\boldalpha$ in Remark \ref{rmk_alpha}. The scale of $\alpha_1|\widehat{\gamma(z)}|^2+ \alpha_2 |z\widehat{\gamma(z)}|^2$ varies mildly, and a blue star marks the upper bound. \textbf{Panel 2:} Prediction error of autocorrelation of the trajectory. \textbf{Panel 3:} Prediction error of the spectral functions of $\theta$ and $\theta_L$. }
\label{fig:F_0_coer_pred_err}
\end{figure}

\subsection{Performance with changing parameters}\label{sec:changing_parameters}
We examine the performance of the estimators with the change of the scaling parameter $\omega$ and the standard deviation of observation noise $\sigma_{obs}$. 

\paragraph{Performance with changing $\omega$}
Under the same numerical settings as above, we take $\omega$ to be exponentially evenly spaced on $[10^{-3}, 10^{2}]$, and compare the performance of estimators $\theta, \theta_L, \theta_1$ and $\theta_2$. The result is presented in Figure \ref{fig:change_omega}. In the first panel, we present the change of the $L^2(\rho)$ errors of the four estimators with $\omega$. The error of our proposed estimator $\theta$ is bounded by the theoretical upper bound in Theorem \ref{thm_main_error_conv}, which is presented as the solid blue line. The upper bound gets larger when $\omega$ gets smaller, for which the measure $\rho$ becomes more uniform, losing the concentration around the origin, hence making the estimation error of $h$ and $g$, and the upper bound $M_\omega^\gamma$ in Theorem \ref{thm_main_error_conv}, larger as shown in the third panel. However, we point out that the constant $m_\omega^{h_\varepsilon}$ does not change much for different scales of $\omega$, whereas the coercivity lower bound $m_2$ for $\mE_2$ as in Proposition \ref{prop_mE1_mE2}, grows as $O(\omega^2)$ when $\omega \to 0$. We could derive a similar upper bound for estimator $\theta_2$, but such an upper bound is impractical because it blows up as $\omega \to 0$.

The error of the estimator $\theta_L$ using explicit inverse Laplace transform of the Prony series exceeds the upper bound when $\omega$ is large, and so does the error of $\theta_1$, which uses the ill-posed loss function $\mE_1$. The error of $\theta_2$, derived from the loss function $\mE_2$, is also controlled by the theoretical upper bound but is outperformed by $\theta$ for the case that $\omega$ is small. The overall decreasing trends of the error as $\omega$ becomes larger is due to the decreasing of the $L^2(\rho)$ norm. We present the relative $L^2(\rho)$ error in the second panel to compensate for such a change. 

In the second panel of Figure \ref{fig:change_omega}, the decreasing tendency of the $\norm{\gamma}_{L^2(\rho)}$ is presented in the solid blue line. After normalization, the relative $L^2(\rho)$ error becomes more uniform across different scales of $\omega$. Note that \reva{$\rho(t) = e^{-2\omega t}$} represents our weight of goal in estimating the kernel $\gamma$. The minimum of the relative error of $\theta$ for $\omega$ around $10^{0.6}$, which indicates an optimal selection of $\omega$. However, this observation is only available oracularly because of the usage of the true kernel $\gamma$. We mention such an optimal value and leave it as future work.

\begin{figure}[h]
		\centering
		\includegraphics[width=1\textwidth]{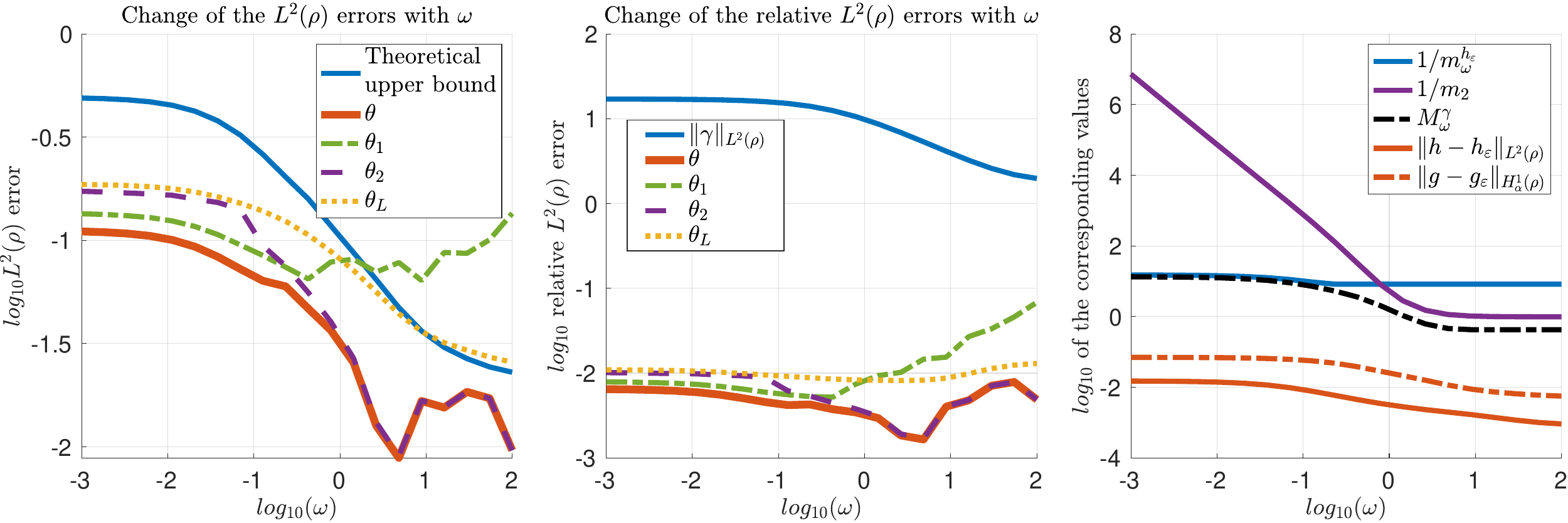}
		\caption{Performance with changing $\omega$. \textbf{Panel 1:} The $L^2(\rho)$ estimation error of $\theta, \theta_L, \theta_1$ and $\theta_2$ with changing $\omega$. The performance of $\theta$ is the best among the four and is controlled by the theoretical upper bound. \textbf{Panel 2:} The relative $L^2(\rho)$ estimation error. There exists an optimal selection of $\omega$ by assuming the knowledge of true kernel $\gamma$. \textbf{Panel 3:} Quantities relevant to the theoretical upper bound. The lower bound $m_\omega^{h_\varepsilon}$ changes mildly with $\omega$, but $1/m_2$ blows up for small $\omega$. } 
		\label{fig:change_omega}
\end{figure}

\paragraph{Performance  with changing $\sigma_{obs}$}
With the same numerical settings as above and let $\omega = 0.25$, we take $\sigma^2_{obs}$ to be exponentially evenly spaces on $[10^{-3}, 10^{1}]$, and compare the performance of estimators $\theta, \theta_L, \theta_1$ and $\theta_2$. The result is presented in Figure \ref{fig:change_obs_std}. The second panel shows that the error in the estimated $g$ and $h$ decreases as $\sigma_{obs}^2 \to 0$. Also, the coercivity constant $m_\omega^{h_\varepsilon}$ decreases when $\sigma_{obs}^2$ gets smaller but will reach a constant. The decreasing speed of squared estimation error in $h$ and $g$ is faster than the decreasing speed of $m_\omega^{h_\varepsilon}$. Therefore, the theoretical upper bound in the first panel decreases as $\sigma_{obs}^2 \to 0$ and reaches a constant value for small $\sigma_{obs}^2$. 

The estimation error of our estimators $\theta$ also decreases with $\sigma_{obs}^2$, bounded by the theoretical upper bound, implying the wellposedness of $\mE$. Notice that the error of $\theta_1$ does not decrease when $\sigma_{obs}^2$ is around $10^{-1.5}$, since the loss function $\mE_1$ is ill-posed. The decreasing phenomenon of the error of $\theta_2$ and $\theta_L$ results from regularization, but their performance lacks justification.

\begin{figure}[h]
		\centering
		\includegraphics[width=0.67\textwidth]{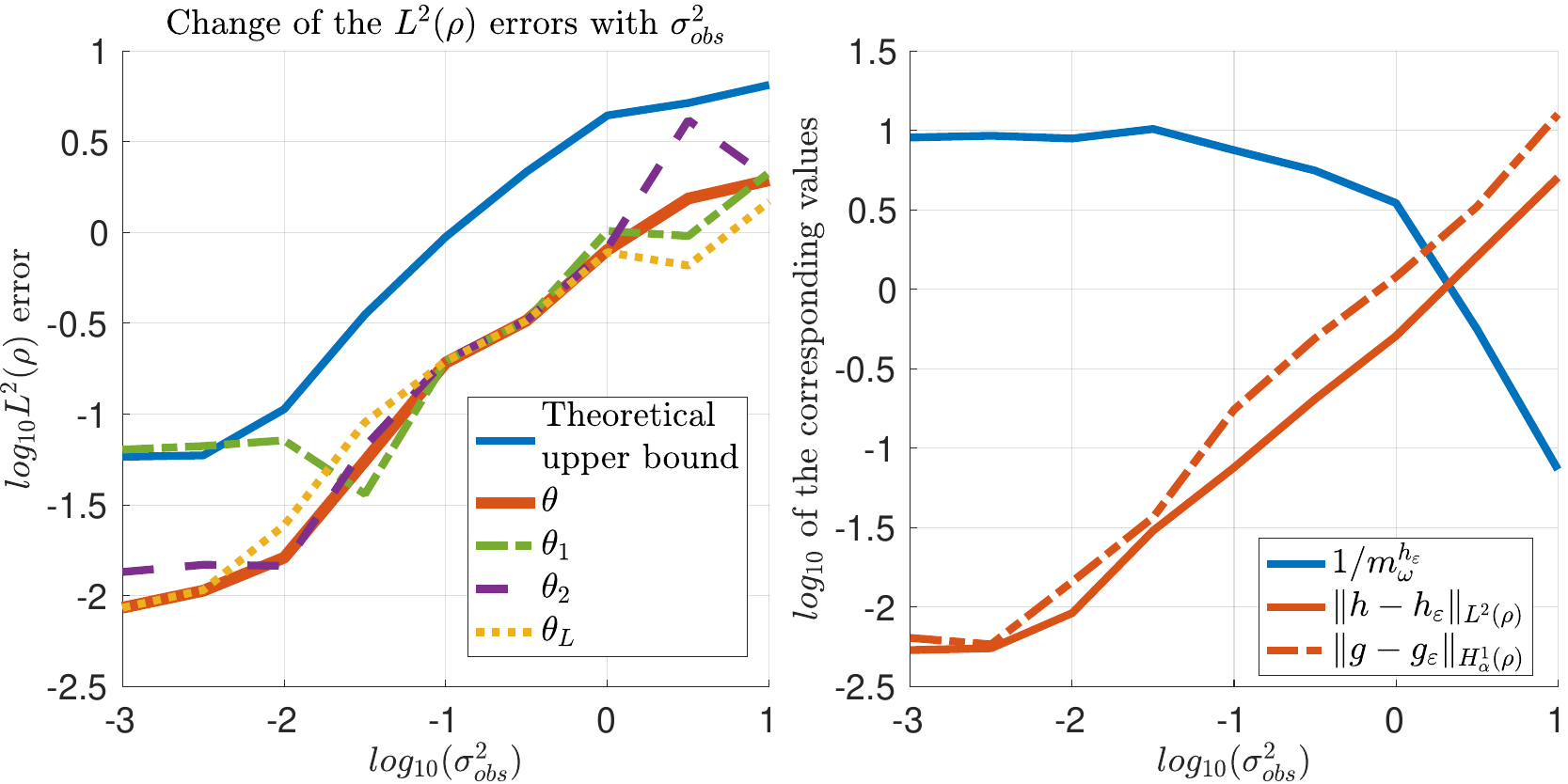}
		\caption{Performance with changing observation noise $\sigma_{obs}$. \textbf{Panel 1:} Estimation errors of $\theta, \theta_L, \theta_1$ and $\theta_2$ with changing $\sigma_{obs}^2$. Both the theoretical upper bound and the $L^2(\rho)$ error of $\theta$ decrease as $\sigma_{obs}$ decreases. \textbf{Panel 2:} Quantities relevant to the theoretical upper bound. The error in the estimated correlation decreases as $\sigma_{obs}$ decreases.}
		\label{fig:change_obs_std}
\end{figure}

\subsection{Examples with force term and drift}

\subsubsection{Double well potential}
\reva{We consider a system with a force derived from a double-well potential and a memory kernel exhibiting power-law decay,
\begin{equation}\label{eq_double_well_power_law}
	F(v) = -v(v^2 - 4), \quad \gamma(t) = \frac{1-3t^2}{(1+t^2)^3}.
\end{equation}
Since the true correlation functions $h$ and $g$ are not analytically available, we cannot provide a theoretical guarantee based on their estimation errors. Instead, we compare results across different data sizes. The external force and power-law memory kernel hinder mixing, leading to poor correlation estimates from temporal integrals. A natural remedy is to use ensemble independent trajectories for more accurate estimation.} Suppose our observed data is given by $\{\bv_{l}^{m}\}_{l = 1, m = 1}^{L, M}$ where $M$ represents independent trajectory samples. We estimate the correlation functions on discrete time grids by
\begin{equation}\label{eq_cov_func_ensemble}
	\bh_{n} = \frac{1}{(L-n)M}\sum_{m = 1}^M\sum_{l = 1}^{L-n} \bv_l^m \bv_{l+n}^m, \ \ \ \boldphi_{n} = \frac{1}{(L-n)M}\sum_{m = 1}^M\sum_{l = 1}^{L-n} \bv_l^m F(\bv_{l+n}^m), 
\end{equation}
then apply the regularized Prony method Algorithm \ref{alg_main} for interpolations and infer the memory kernel using Algorithm \ref{alg_main}. We compare the performance of one short trajectory $L = 2^{12}$, one long trajectory $L = 2^{16}$ and ensemble short trajectories $L = 2^{12}, M = 2000$. The results are illustrated in Figure \ref{fig:F_deep_double_well_G_0}. \reva{The experiment uses $\rho(t) = e^{-0.1t}$. The true trajectory is simulated with time step $\Delta t = 0.0125$, and observations are taken every $10\Delta t = 0.125$ with noise level $\sigma_{obs} = 0.01$. We estimate 30 discrete values of the autocorrelation functions and apply the Prony method with 10 modes for interpolation. For regression, we use cubic splines on $[0, 30]$ with 50 knots. The parameter $\balpha$ from Remark~\ref{rmk_alpha} is set to $(0.9725, 0.0275)$, $(0.9805, 0.0195)$, and $(0.9999, 0.0001)$ in the respective tests.

\begin{remark}\label{rmk_powerlaw}
 	The power-law kernel defined in \eqref{eq_double_well_power_law} does not have exponential decay. Therefore, our previous analysis does not hold. However, the same algorithm can be applied empirically to estimate the memory kernel. See the Appendix \ref{apdx} for a thorough study of learning the power-law memory kernel. 
\end{remark}

}

\begin{figure}[h]
		\centering
		\includegraphics[width=0.8\textwidth]{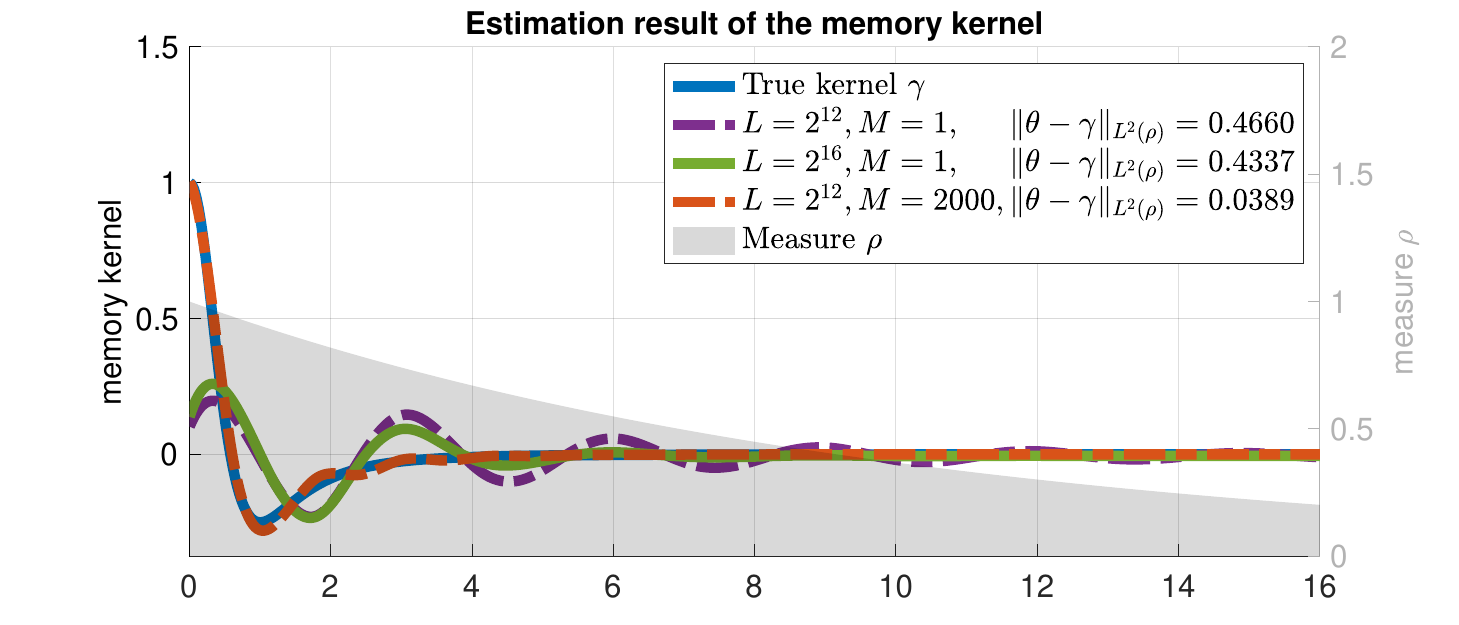}
		\caption{Estimation results for a system with a double-well potential and power-law memory kernel. Non-stationary dynamics lead to poor autocorrelation estimates from temporal averages, so increasing $L$ does not improve the estimation of $\gamma$. In contrast, a larger number of independent trajectories $M$ yields more accurate results.}
		\label{fig:F_deep_double_well_G_0}
\end{figure}

\subsubsection{Duffing drift}
We further illustrate the previous idea by adding a drift term $G(t) = \frac{1}{10}\cos(t)$, which is derived from the Duffing oscillator. 
\begin{equation}
	mv'(t) = F(v(t)) - \int_0^t \gamma(t-s)v(s) ds + R(t) + G(t).
\end{equation}
It is straightforward to apply the previous algorithms and use equation \eqref{eq_cov_func_ensemble} to estimate the correlation functions, since $\innerp{G(t), v(0)} = 0$ as long as $\mathbb{E}v(0) = 0$. The appearance of the Duffing drift terms further impedes the estimation of the correlation functions using temporal integrals, similar to the case in the double well potential. Here, we give an asymptotic behavior of the performance of our algorithm with a changing number of data, both in time length $L$ and number of independent trajectories $M$. The experiment is performed for a batch of 10 independent trials, and the mean (solid lines) and quantiles (shaded region) are presented in Figure \ref{fig:F_deep_double_well_G_Duffing_compare}. We first fix $M = 1$ and change $L$ from $2^{11}$ to $2^{18}$. The other test fixes \reva{$L = 2^{12}$} and changes $M$ from $1$ to $2000$. We compare the performance based on the number of summands in equation \eqref{eq_cov_func_ensemble}, i.e. $M\times L$. It shows that the error decays with an increasing number of trajectories $M$ with an approximate order of $1/2$, which agrees with the convergence rate of the Monte Carlo integral, showing the well-posedness of our estimator. However, the error decays slowly with increasing trajectory length $L$. 
\begin{figure}[h]
		\centering
		\includegraphics[width=0.5\textwidth]{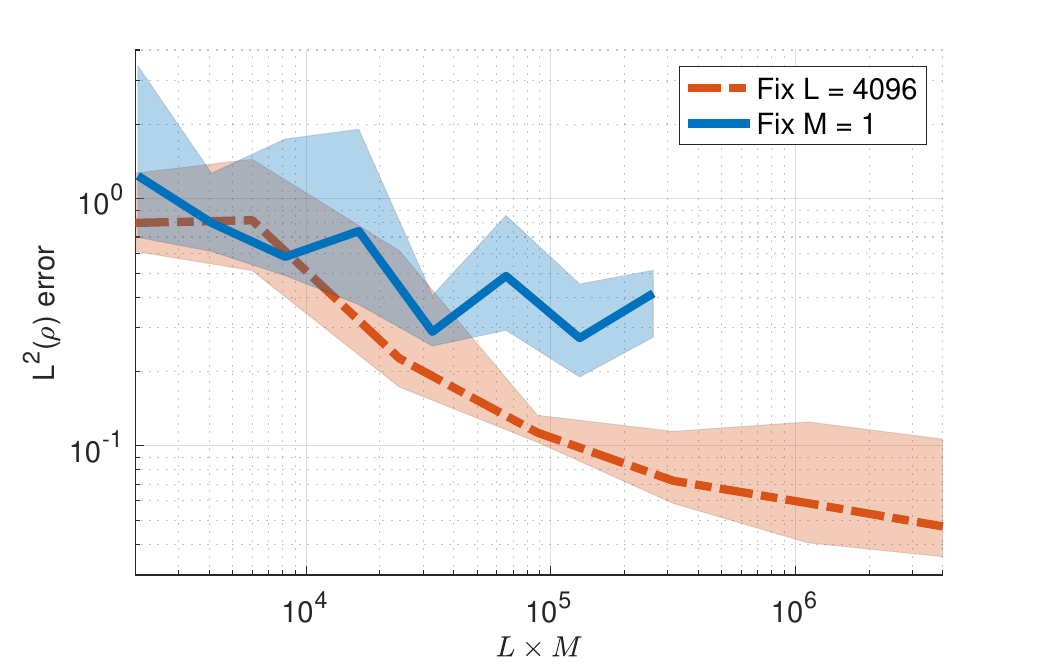}
		\caption{Compare the result with changing trajectory length $L$ and number of independent trajectories $M$, with double well external force and Duffing drift. Fix $L = 4096$ and change $M$ in $[1, 2000]$, and fix $M  = 1$, change $L$ in $[2^{11}, 2^{18}]$. The error decays with $M$ with approximate order 1/2, which agrees with the order of Monte Carlo integration and shows the well-posedness of our estimator. The error barely decays with increasing $L$. The X-axis of the two tests is matched by $L \times M$, the number of summands in the estimation of correlation functions. }
		\label{fig:F_deep_double_well_G_Duffing_compare}
\end{figure}

\section{Conclusion and future work}\label{sec_conclusion}
We propose a Sobolev norm loss function for the estimation of the memory kernel in the generalized Langevin equation. Based on this loss function, we provide a least square estimation algorithm with a performance guarantee. The accuracy of derivatives of the correlation functions used in the loss function is achieved through a regularized Prony method, assuming the exponential decay of the correlation functions. The comparison between commonly used loss functions is shown by numerical examples. The estimation of the correlation functions can be generalized to utilize multiple trajectories, with the presence of force term and drift term.

\revb{Our two-stage procedure can also be interpreted as a hierarchical regularization framework of bias-variance trade-off. The empirical correlation functions \( h(t) \) and $g(t)$ are projected onto a low-dimensional exponential basis using the Prony method and further smoothed through RKHS regularization, which significantly reduces noise but introduces bias due to model truncation and possible loss of long-time memory effects. The second stage applies additional regularization by solving a regression problem with a mixed Sobolev loss. The tuning parameters \( \omega \), \( \alpha_1 \), and \( \alpha_2 \) act as smoothing controls, as larger $\omega$ emphasizes low-frequency behavior, leading to lower variance but greater bias. }

Future work can be conducted in the following aspects.
\begin{itemize}
	\item Extend the current framework to higher dimensions, for which case the memory kernel becomes a matrix. A similar analysis should be performed for each matrix element. 
	\item Express the error of correlation functions in terms of the parameters of Prony methods, trajectory length, and observation noise. \revb{Also, express the estimation error in terms of the parameter $\omega$, $\alpha_1$ and $\alpha_2$, which explicitly shows the bias-variance trade-off. }
\end{itemize}

\section*{Acknowledgment}

This work is supported in part by the National Science Foundation via award NSF DMS-2309378. The authors would like to thank Xiantao Li for the helpful discussions and suggestions which have helped to improve the overall presentation of this paper.

\appendix

\section{Learning Power Law memory kernel}\label{apdx}
\revb{
Here we present the numerical example of learning a power-law decaying memory kernel with no external force ($F=0$), where the kernel is given by 
\begin{equation}
	\gamma(t) = \frac{1-3t^2}{(1+t^2)^3}.
\end{equation}
The experiment uses $\rho(t) = e^{-0.1t}$. The true trajectory is simulated with time step $\Delta t = 0.0125$, and observations are taken every $10 \Delta t = 0.125$ with noise level $\sigma_{obs} = 0.01$. We estimate 30 discrete values of the autocorrelation functions and apply the Prony method with 10 modes for interpolation. For regression, we use cubic splines on $[0, 30]$ with 50 knots. 

As noted in Remark \ref{rmk_powerlaw}, the kernel $\gamma$ exhibits long memory compared to exponentially decaying kernels, resulting in significantly slower convergence to stationarity. This phenomenon has been rigorously analyzed in the literature; see, for example, \cite{herzog2023gibbsian}. We illustrate this effect numerically by comparing the estimated correlation functions $h$ and $g$ under two different data regimes: a single long trajectory ($L = 2^{16}, M = 1$) and an ensemble of short trajectories ($L = 2^{12}, M = 2000$). See the top panel of Figure \ref{fig:powerlaw_all}.
\begin{figure}[thb!]
		\centering
		\includegraphics[width=0.9\textwidth]{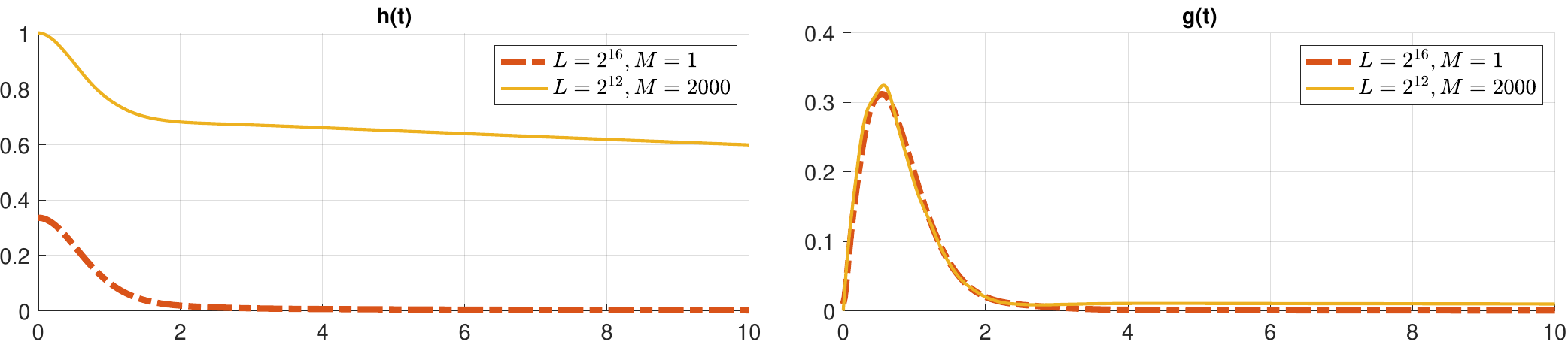}
		\vspace{1em}
		\includegraphics[width=0.9\textwidth]{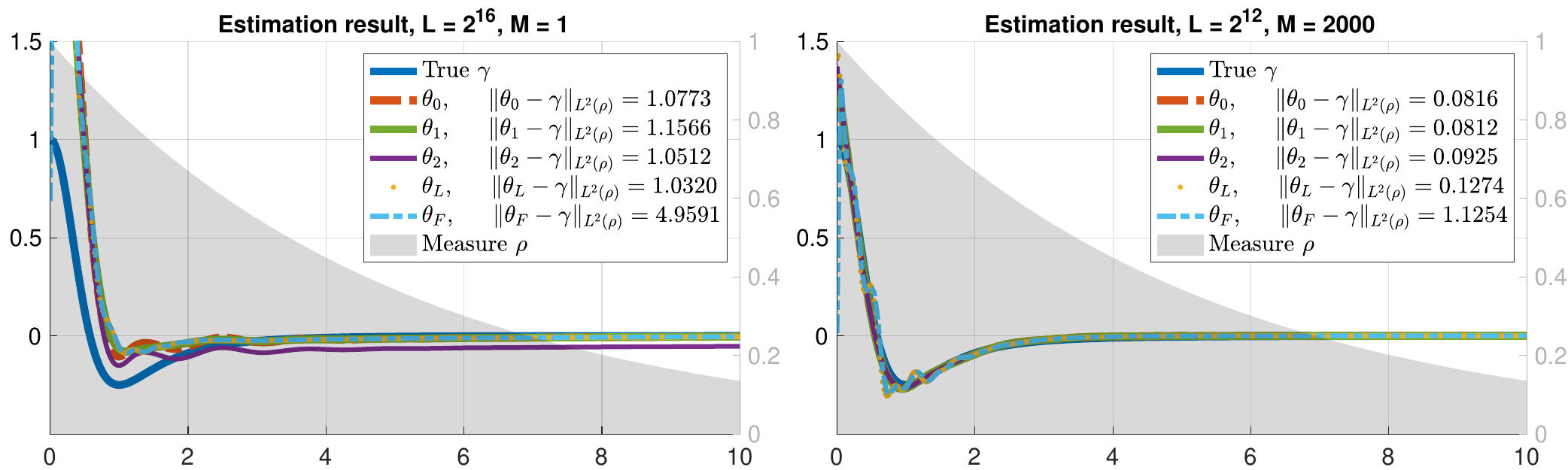}
		\caption{Power-law memory kernel. \textbf{Top}: Comparison of autocorrelation function approximations obtained via temporal averaging and ensemble (sample) averaging. \textbf{Bottom}: Corresponding memory kernel estimation result. The ensemble average yields more reliable autocorrelation estimates, resulting in more accurate memory kernel recovery.}
		\label{fig:powerlaw_all}
\end{figure}

The correlation functions $h$ and $g$ estimated from temporal averaging do not satisfy the equation $g = h * \gamma$. In contrast, we argue that the correlation functions obtained through ensemble (sample) averaging provide more accurate estimates. Consequently, regression-based estimators for $\gamma$ exhibit higher errors when relying on temporally averaged data. Moreover, due to the presence of a branch cut along the negative real axis in the Laplace transform of $\gamma$, the quantity $M_\omega^\gamma$ becomes large as $\omega \to 0$, weakening the theoretical error bounds. Also, since explicit correlation functions are not available, we did not provide the theoretical guarantee of the performance. Although higher values of $\omega$ offer improved theoretical guarantees, in practice, the estimation performance remains robust across a wide range of $\omega$. A similar phenomenon was observed in Section \ref{sec:changing_parameters}. For comparison, we also include results from Tikhonov–Fourier deconvolution as a baseline. This method suffers from the Gibbs phenomenon near zero, leading to significant estimation errors. The estimation results are also comparable to those of inverse Laplace transform methods, as both frequency-domain approaches exhibit oscillations caused by the difficulty of regularization. The estimation results are presented at the bottom of Figure \ref{fig:powerlaw_all}.

As a comparison, we also present baseline results for learning the memory kernel $\gamma(t) = e^{-t}$. Both temporal and ensemble averaging yield accurate approximations of the correlation functions, with the ensemble average offering slightly better accuracy. In this setting, the true correlation functions can be derived explicitly, allowing us to compute the $L^2(\rho)$ errors for $h$ and $g$. Combined with Theorem \ref{thm_main_error_conv}, this enables a quantitative performance guarantee. The resulting error bounds are $0.0437$ and $0.0705$ for the single long trajectory and ensemble short trajectory settings. These results are summarized in Figure \ref{fig:exp_result}.

\begin{figure}[thb!]
    \centering
    \includegraphics[width=0.9\textwidth]{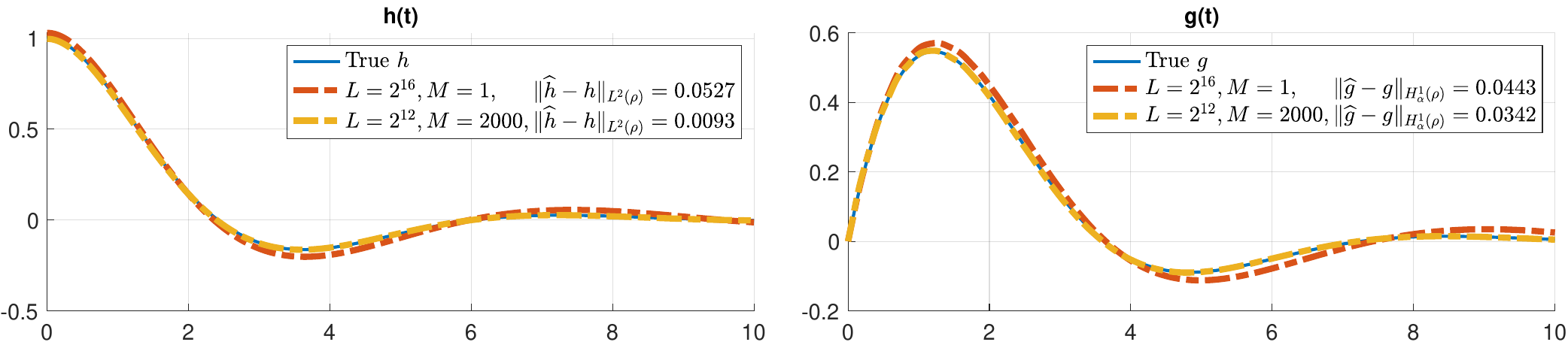}
    \vspace{1em}
    \includegraphics[width=0.9\textwidth]{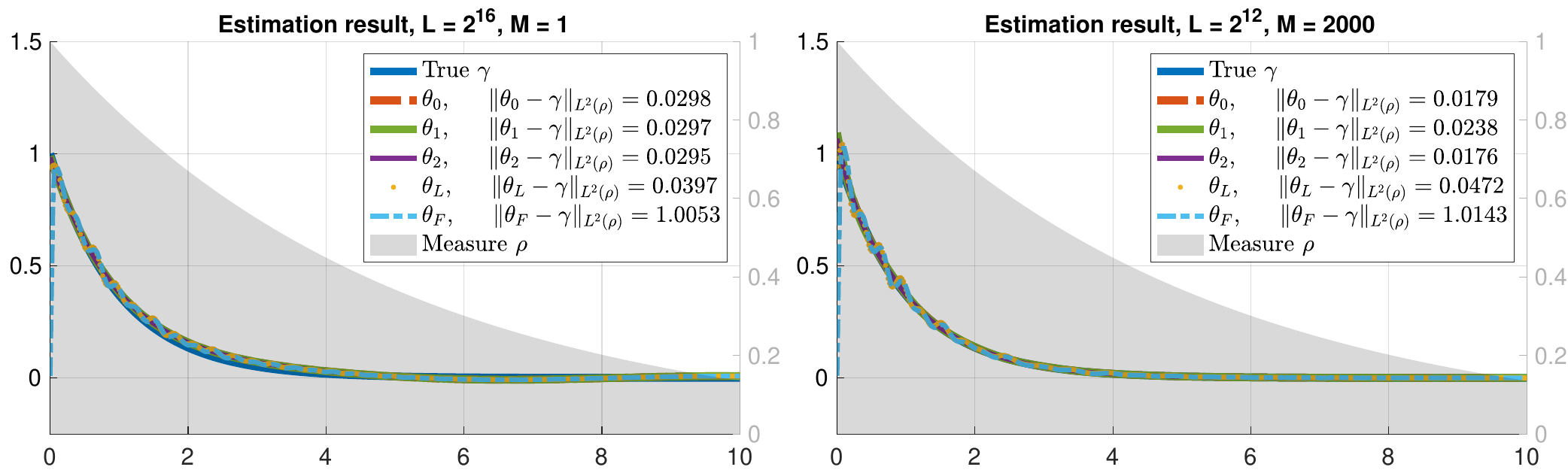}
    \caption{Exponential memory kernel. \textbf{Top}: Single long trajectory and ensemble short trajectory provide similar correlation functions estimation. \textbf{Bottom}: Ensemble averaging yields slightly more accurate kernel recovery due to improved autocorrelation estimates.}
    \label{fig:exp_result}
\end{figure}

}

\printbibliography

@article{herzog2023gibbsian,
  title={Gibbsian dynamics and the generalized Langevin equation},
  author={Herzog, David P and Mattingly, Jonathan C and Nguyen, Hung D},
  journal={Electronic Journal of Probability},
  volume={28},
  pages={1--29},
  year={2023},
  publisher={The Institute of Mathematical Statistics and the Bernoulli Society}
}

@article{baczewskiNumericalIntegrationExtended2013,
  title = {Numerical Integration of the Extended Variable Generalized {{Langevin}} Equation with a Positive {{Prony}} Representable Memory Kernel},
  author = {Baczewski, Andrew D. and Bond, Stephen D.},
  date = {2013-07-28},
  journaltitle = {The Journal of Chemical Physics},
  volume = {139},
  number = {4},
  pages = {044107},
  issn = {0021-9606, 1089-7690},
  doi = {10.1063/1.4815917},
  url = {https://pubs.aip.org/jcp/article/139/4/044107/72935/Numerical-integration-of-the-extended-variable},
  urldate = {2023-10-31},
  abstract = {Generalized Langevin dynamics (GLD) arise in the modeling of a number of systems, ranging from structured fluids that exhibit a viscoelastic mechanical response, to biological systems, and other media that exhibit anomalous diffusive phenomena. Molecular dynamics (MD) simulations that include GLD in conjunction with external and/or pairwise forces require the development of numerical integrators that are efficient, stable, and have known convergence properties. In this article, we derive a family of extended variable integrators for the Generalized Langevin equation with a positive Prony series memory kernel. Using stability and error analysis, we identify a superlative choice of parameters and implement the corresponding numerical algorithm in the LAMMPS MD software package. Salient features of the algorithm include exact conservation of the first and second moments of the equilibrium velocity distribution in some important cases, stable behavior in the limit of conventional Langevin dynamics, and the use of a convolution-free formalism that obviates the need for explicit storage of the time history of particle velocities. Capability is demonstrated with respect to accuracy in numerous canonical examples, stability in certain limits, and an exemplary application in which the effect of a harmonic confining potential is mapped onto a memory kernel.},
  langid = {english},
  file = {/Users/langquanjun/Zotero/storage/SP584PL8/Baczewski and Bond - 2013 - Numerical integration of the extended variable gen.pdf}
}

@article{bockiusModelReductionTechniques2021,
  title = {Model Reduction Techniques for the Computation of Extended {{Markov}} Parameterizations for Generalized {{Langevin}} Equations},
  author = {Bockius, Niklas and Shea, Jeanine and Jung, Gerhard and Schmid, Friederike and Hanke, Martin},
  date = {2021-05},
  journaltitle = {Journal of Physics: Condensed Matter},
  shortjournal = {J. Phys.: Condens. Matter},
  volume = {33},
  number = {21},
  pages = {214003},
  publisher = {{IOP Publishing}},
  issn = {0953-8984},
  doi = {10.1088/1361-648X/abe6df},
  url = {https://dx.doi.org/10.1088/1361-648X/abe6df},
  urldate = {2023-10-31},
  abstract = {The generalized Langevin equation is a model for the motion of coarse-grained particles where dissipative forces are represented by a memory term. The numerical realization of such a model requires the implementation of a stochastic delay-differential equation and the estimation of a corresponding memory kernel. Here we develop a new approach for computing a data-driven Markov model for the motion of the particles, given equidistant samples of their velocity autocorrelation function. Our method bypasses the determination of the underlying memory kernel by representing it via up to about twenty auxiliary variables. The algorithm is based on a sophisticated variant of the Prony method for exponential interpolation and employs the positive real lemma from model reduction theory to extract the associated Markov model. We demonstrate the potential of this approach for the test case of anomalous diffusion, where data are given analytically, and then apply our method to velocity autocorrelation data of molecular dynamics simulations of a colloid in a Lennard-Jones fluid. In both cases, the velocity autocorrelation function and the memory kernel can be reproduced very accurately. Moreover, we show that the algorithm can also handle input data with large statistical noise. We anticipate that it will be a very useful tool in future studies that involve dynamic coarse-graining of complex soft matter systems.},
  langid = {english},
  keywords = {Main},
  file = {/Users/langquanjun/Zotero/storage/M6ZPX3JS/Bockius et al. - 2021 - Model reduction techniques for the computation of .pdf}
}

@article{carriereHighResolutionRadar1992,
  title = {High Resolution Radar Target Modeling Using a Modified {{Prony}} Estimator},
  author = {Carriere, Rob and Moses, Randolph L.},
  date = {1992-01},
  journaltitle = {IEEE Transactions on Antennas and Propagation},
  volume = {40},
  number = {1},
  pages = {13--18},
  issn = {1558-2221},
  doi = {10.1109/8.123348},
  url = {https://ieeexplore.ieee.org/document/123348},
  urldate = {2023-12-12},
  abstract = {A method for characterizing radar signatures using a Prony model is developed based on the concept of scattering centers. A parameterization of the Prony model specific to the radar target identification problem is chosen and several key components to the algorithm, including the use of singular value decomposition and the removal of spurious scattering centers, are presented. The resulting algorithm is tested with data taken from a compact range. These tests include comparison of different targets, different aspect angles and frequency ranges, as well as robustness tests on the algorithm and evaluation of performance in noise.{$<>$}},
  eventtitle = {{{IEEE Transactions}} on {{Antennas}} and {{Propagation}}},
  file = {/Users/langquanjun/Zotero/storage/VT8R7C3R/Carriere and Moses - 1992 - High resolution radar target modeling using a modi.pdf;/Users/langquanjun/Zotero/storage/XMKEMLL7/123348.html}
}

@article{chenComputationMemoryFunctions2014,
  title = {Computation of the Memory Functions in the Generalized {{Langevin}} Models for Collective Dynamics of Macromolecules},
  author = {Chen, Minxin and Li, Xiantao and Liu, Chun},
  date = {2014-08-13},
  journaltitle = {The Journal of Chemical Physics},
  shortjournal = {The Journal of Chemical Physics},
  volume = {141},
  number = {6},
  pages = {064112},
  issn = {0021-9606},
  doi = {10.1063/1.4892412},
  url = {https://doi.org/10.1063/1.4892412},
  urldate = {2023-12-06},
  abstract = {We present a numerical method to approximate the memory functions in the generalized Langevin models for the collective dynamics of macromolecules. We first derive the exact expressions of the memory functions, obtained from projection to subspaces that correspond to the selection of coarse-grain variables. In particular, the memory functions are expressed in the forms of matrix functions, which will then be approximated by Krylov-subspace methods. It will also be demonstrated that the random noise can be approximated under the same framework, and the second fluctuation-dissipation theorem is automatically satisfied. The accuracy of the method is examined through several numerical examples.},
  file = {/Users/langquanjun/Zotero/storage/FYCLX9I5/Chen et al. - 2014 - Computation of the memory functions in the general.pdf;/Users/langquanjun/Zotero/storage/9J4LGEU3/Computation-of-the-memory-functions-in-the.html}
}

@article{chorinProblemReductionRenormalization2006,
  title = {Problem Reduction, Renormalization, and Memory},
  author = {Chorin, Alexandre and Stinis, Panagiotis},
  date = {2006-12-31},
  journaltitle = {Communications in Applied Mathematics and Computational Science},
  shortjournal = {CAMCoS},
  volume = {1},
  number = {1},
  pages = {1--27},
  issn = {2157-5452, 1559-3940},
  doi = {10.2140/camcos.2006.1.1},
  url = {http://msp.org/camcos/2006/1-1/p01.xhtml},
  urldate = {2024-02-02},
  abstract = {Methods for the reduction of the complexity of computational problems are presented, as well as their connections to renormalization, scaling, and irreversible statistical mechanics. Several statistically stationary cases are analyzed; for time dependent problems averaging usually fails, and averaged equations must be augmented by appropriate memory and random forcing terms. Approximations are described and examples are given.},
  langid = {english},
  keywords = {/unread},
  file = {/Users/langquanjun/Zotero/storage/FAMS4MHL/Chorin and Stinis - 2006 - Problem reduction, renormalization, and memory.pdf}
}

@article{dammakQuantumThermalBath2009,
  title = {Quantum {{Thermal Bath}} for {{Molecular Dynamics Simulation}}},
  author = {Dammak, Hichem and Chalopin, Yann and Laroche, Marine and Hayoun, Marc and Greffet, Jean-Jacques},
  date = {2009-11-05},
  journaltitle = {Physical Review Letters},
  shortjournal = {Phys. Rev. Lett.},
  volume = {103},
  number = {19},
  pages = {190601},
  publisher = {{American Physical Society}},
  doi = {10.1103/PhysRevLett.103.190601},
  url = {https://link.aps.org/doi/10.1103/PhysRevLett.103.190601},
  urldate = {2024-01-29},
  abstract = {Molecular dynamics (MD) is a numerical simulation technique based on classical mechanics. It has been taken for granted that its use is limited to a large temperature regime where classical statistics is valid. To overcome this limitation, the authors introduce in a universal way a quantum thermal bath that accounts for quantum statistics while using standard MD. The efficiency of the new technique is illustrated by reproducing several experimental data at low temperatures in a regime where quantum statistical effects cannot be neglected.},
  keywords = {/unread},
  file = {/Users/langquanjun/Zotero/storage/KHVYL9M4/Dammak et al. - 2009 - Quantum Thermal Bath for Molecular Dynamics Simula.pdf;/Users/langquanjun/Zotero/storage/3DTTQ3XI/PhysRevLett.103.html}
}

@article{dehoogImprovedMethodNumerical1982,
  title = {An {{Improved Method}} for {{Numerical Inversion}} of {{Laplace Transforms}}},
  author = {family=Hoog, given=Frank R., prefix=de, useprefix=true and Knight, John H. and Stokes, A. N.},
  date = {1982-09},
  journaltitle = {SIAM Journal on Scientific and Statistical Computing},
  shortjournal = {SIAM J. Sci. and Stat. Comput.},
  volume = {3},
  number = {3},
  pages = {357--366},
  publisher = {{Society for Industrial and Applied Mathematics}},
  issn = {0196-5204},
  doi = {10.1137/0903022},
  url = {https://epubs.siam.org/doi/10.1137/0903022},
  urldate = {2023-12-07},
  abstract = {A method for calculating a continued fraction expansion of Laplace transforms is presented in terms of the tau-function of the continuous time Toda molecule equation and its integrable discretization. The z-transform and a discrete analogue of the Laplace transform of a set of finite samples are also formulated.},
  file = {/Users/langquanjun/Zotero/storage/V82ZYCIN/de Hoog et al. - 1982 - An Improved Method for Numerical Inversion of Lapl.pdf}
}

@article{epsteinBadTruthLaplace2008,
  title = {The {{Bad Truth}} about {{Laplace}}'s {{Transform}}},
  author = {Epstein, Charles L. and Schotland, John},
  date = {2008},
  journaltitle = {SIAM Review},
  volume = {50},
  number = {3},
  eprint = {20454138},
  eprinttype = {jstor},
  pages = {504--520},
  publisher = {{Society for Industrial and Applied Mathematics}},
  issn = {0036-1445},
  url = {https://www.jstor.org/stable/20454138},
  urldate = {2023-12-07},
  abstract = {Inverting the Laplace transform is a paradigm for exponentially ill-posed problems. For a class of operators, including the Laplace transform, we give forward and inverse formulae that have fast implementations using the fast Fourier transform. These formulae lead easily to regularized inverses whose effects on noise and filtered data can be precisely described. Our results give cogent reasons for the general sense of dread most mathematicians feel about inverting the Laplace transform.},
  file = {/Users/langquanjun/Zotero/storage/PIBXL9IS/Epstein and Schotland - 2008 - The Bad Truth about Laplace's Transform.pdf}
}

@article{frankignoulStochasticClimateModels1977,
  title = {Stochastic Climate Models, {{Part II Application}} to Sea-Surface Temperature Anomalies and Thermocline Variability},
  author = {Frankignoul, Claude and Hasselmann, Klaus},
  date = {1977-01-01},
  journaltitle = {Tellus},
  volume = {29},
  number = {4},
  pages = {289--305},
  publisher = {{Taylor \& Francis}},
  issn = {0040-2826},
  doi = {10.3402/tellusa.v29i4.11362},
  url = {https://doi.org/10.3402/tellusa.v29i4.11362},
  urldate = {2024-01-29},
  abstract = {The concept of stochastic climate models developed in Part I of this series (Hasselmann, 1976) is applied to the investigation of the low frequency variability of the upper ocean. It is shown that large-scale, long-time sea surface temperature (SST) anomalies may be explained naturally as the response of the oceanic surface layers to short-time-scale atmospheric forcing. The white-noise spectrum of the atmospheric input produces a red response spectrum, with most of the variance concentrated in very long periods. Without stabilizing negative feedback, the oceanic response would be nonstationary, the total SST variance growing indefinitely with time. With negative feedback, the response is asymptotically stationary. These effects are illustrated through numerical experiments with a very simple ocean-atmosphere model. The model reproduces the principal features and orders of magnitude of the observed SST anomalies in mid-latitudes. Independent support of the stochastic forcing model is provided by direct comparisons of observed sensible and latent heat flux spectra with SST anomaly spectra, and also by the structure of the cross correlation functions of atmospheric surface pressure and SST anomaly patterns. The numerical model is further used to simulate anomalies in the near-surface thermocline through Ekman pumping driven by the curl of the wind stress. The results suggest that short-time-scale atmospheric forcing should be regarded as a possible candidate for the origin of large-scale, low-period variability in the seasonal thermocline.},
  keywords = {/unread},
  file = {/Users/langquanjun/Zotero/storage/D96HPPJE/Frankignoul and Hasselmann - 1977 - Stochastic climate models, Part II Application to .pdf}
}

@article{gordonGeneralizedLangevinModels2009,
  title = {Generalized {{Langevin}} Models of Molecular Dynamics Simulations with Applications to Ion Channels},
  author = {Gordon, Dan and Krishnamurthy, Vikram and Chung, Shin-Ho},
  date = {2009-10-02},
  journaltitle = {The Journal of Chemical Physics},
  shortjournal = {The Journal of Chemical Physics},
  volume = {131},
  number = {13},
  pages = {134102},
  issn = {0021-9606},
  doi = {10.1063/1.3233945},
  url = {https://doi.org/10.1063/1.3233945},
  urldate = {2024-01-29},
  abstract = {We present a new methodology, which combines molecular dynamics and stochastic dynamics, for modeling the permeation of ions across biological ion channels. Using molecular dynamics, a free energy profile is determined for the ion(s) in the channel, and the distribution of random and frictional forces is measured over discrete segments of the ion channel. The parameters thus determined are used in stochastic dynamics simulations based on the nonlinear generalized Langevin equation. We first provide the theoretical basis of this procedure, which we refer to as “distributional molecular dynamics,” and detail the methods for estimating the parameters from molecular dynamics to be used in stochastic dynamics. We test the technique by applying it to study the dynamics of ion permeation across the gramicidin pore. Given the known difficulty in modeling the conduction of ions in gramicidin using classical molecular dynamics, there is a degree of uncertainty regarding the validity of the MD-derived potential of mean force (PMF) for gramicidin. Using our techniques and systematically changing the PMF, we are able to reverse engineer a modified PMF which gives a current-voltage curve closely matching experimental results.},
  file = {/Users/langquanjun/Zotero/storage/X7ZL3Q2U/Gordon et al. - 2009 - Generalized Langevin models of molecular dynamics .pdf;/Users/langquanjun/Zotero/storage/6P8WHHFC/280444.html}
}

@article{groganDatadrivenMolecularModeling2020,
  title = {Data-Driven Molecular Modeling with the Generalized {{Langevin}} Equation},
  author = {Grogan, Francesca and Lei, Huan and Li, Xiantao and Baker, Nathan A.},
  date = {2020-10-01},
  journaltitle = {Journal of computational physics},
  shortjournal = {J Comput Phys},
  volume = {418},
  eprint = {32952214},
  eprinttype = {pmid},
  pages = {109633},
  issn = {0021-9991},
  doi = {10.1016/j.jcp.2020.109633},
  url = {https://www.ncbi.nlm.nih.gov/pmc/articles/PMC7494205/},
  urldate = {2024-02-02},
  abstract = {The complexity of molecular dynamics simulations necessitates dimension reduction and coarse-graining techniques to enable tractable computation. The generalized Langevin equation (GLE) describes coarse-grained dynamics in reduced dimensions. In spite of playing a crucial role in non-equilibrium dynamics, the memory kernel of the GLE is often ignored because it is difficult to characterize and expensive to solve. To address these issues, we construct a data-driven rational approximation to the GLE. Building upon previous work leveraging the GLE to simulate simple systems, we extend these results to more complex molecules, whose many degrees of freedom and complicated dynamics require approximation methods. We demonstrate the effectiveness of our approximation by testing it against exact methods and comparing observables such as autocorrelation and transition rates.},
  pmcid = {PMC7494205},
  keywords = {Coarse-grained models,Data-driven parametrization,Dimension reduction,Generalized Langevin equation,Molecular dynamics},
  file = {/Users/langquanjun/Zotero/storage/H8WN5SZJ/Lei2020.pdf}
}

@article{hauerInitialResultsProny1990,
  title = {Initial Results in {{Prony}} Analysis of Power System Response Signals},
  author = {Hauer, John F. and Demeure, C.J. and Scharf, Louis L.},
  date = {1990-02},
  journaltitle = {IEEE Transactions on Power Systems},
  volume = {5},
  number = {1},
  pages = {80--89},
  issn = {1558-0679},
  doi = {10.1109/59.49090},
  url = {https://ieeexplore.ieee.org/document/49090},
  urldate = {2024-01-02},
  abstract = {Prony analysis extends Fourier analysis by directly estimating the frequency, damping, strength, and relative phase of modal components present in a given signal. The ability to extract such information from transient stability program simulations and from large-scale system tests of disturbances would be quite valuable to power system engineers. Early results of the application of this method to stability program output are reported. Also included are benchmarks against known models and a brief mathematical summary.{$<>$}},
  eventtitle = {{{IEEE Transactions}} on {{Power Systems}}},
  file = {/Users/langquanjun/Zotero/storage/PIFG8DDQ/Hauer et al. - 1990 - Initial results in Prony analysis of power system .pdf;/Users/langquanjun/Zotero/storage/P4LT29EW/49090.html}
}

@article{huaMatrixPencilMethod1990,
  title = {Matrix Pencil Method for Estimating Parameters of Exponentially Damped/Undamped Sinusoids in Noise},
  author = {Hua, Yingbo and Sarkar, Tapan K.},
  date = {1990-05},
  journaltitle = {IEEE Transactions on Acoustics, Speech, and Signal Processing},
  volume = {38},
  number = {5},
  pages = {814--824},
  issn = {0096-3518},
  doi = {10.1109/29.56027},
  url = {https://ieeexplore.ieee.org/document/56027},
  urldate = {2023-10-31},
  abstract = {A study of a matrix pencil method for estimating frequencies and damping factors of exponentially damped and/or undamped sinusoids in noise is presented. Comparison of this method to a polynomial method (SVD-Prony method) shows that the matrix pencil method and the polynomial method are two special cases of a matrix prediction approach and that the pencil method is more efficient in computation and less restrictive about signal probes. It is found through perturbation analysis and simulation that, for signals with unknown damping factors, the pencil method is less sensitive to noise than the polynomial method. An expression of the Cramer-Rao bound for the exponential signals is presented.{$<>$}},
  eventtitle = {{{IEEE Transactions}} on {{Acoustics}}, {{Speech}}, and {{Signal Processing}}},
  keywords = {Matrix Pencil},
  file = {/Users/langquanjun/Zotero/storage/BCYK9NR7/Hua and Sarkar - 1990 - Matrix pencil method for estimating parameters of .pdf;/Users/langquanjun/Zotero/storage/NWJDEA7I/56027.html}
}

@article{huSimulationStochasticProcesses1997,
  title = {On the Simulation of Stochastic Processes by Spectral Representation},
  author = {Hu, Bin and Schiehlen, Werner},
  date = {1997-04},
  journaltitle = {Probabilistic Engineering Mechanics},
  shortjournal = {Probabilistic Engineering Mechanics},
  volume = {12},
  number = {2},
  pages = {105--113},
  issn = {02668920},
  doi = {10.1016/S0266-8920(96)00039-2},
  url = {https://linkinghub.elsevier.com/retrieve/pii/S0266892096000392},
  urldate = {2023-11-16},
  langid = {english},
  file = {/Users/langquanjun/Zotero/storage/DIIAWPDC/Hu and Schiehlen - 1997 - On the simulation of stochastic processes by spect.pdf}
}

@article{jungIterativeReconstructionMemory2017,
  title = {Iterative {{Reconstruction}} of {{Memory Kernels}}},
  author = {Jung, Gerhard and Hanke, Martin and Schmid, Friederike},
  date = {2017-06-13},
  journaltitle = {Journal of Chemical Theory and Computation},
  shortjournal = {J. Chem. Theory Comput.},
  volume = {13},
  number = {6},
  pages = {2481--2488},
  publisher = {{American Chemical Society}},
  issn = {1549-9618},
  doi = {10.1021/acs.jctc.7b00274},
  url = {https://doi.org/10.1021/acs.jctc.7b00274},
  urldate = {2024-02-02},
  abstract = {In recent years, it has become increasingly popular to construct coarse-grained models with non-Markovian dynamics to account for an incomplete separation of time scales. One challenge of a systematic coarse-graining procedure is the extraction of the dynamical properties, namely, the memory kernel, from equilibrium all-atom simulations. In this article, we propose an iterative method for memory reconstruction from dynamical correlation functions. Compared to previously proposed noniterative techniques, it ensures by construction that the target correlation functions of the original fine-grained systems are reproduced accurately by the coarse-grained system, regardless of time step and discretization effects. Furthermore, we also propose a new numerical integrator for generalized Langevin equations that is significantly more accurate than the more commonly used generalization of the velocity Verlet integrator. We demonstrate the performance of the above-described methods using the example of backflow-induced memory in the Brownian diffusion of a single colloid. For this system, we are able to reconstruct realistic coarse-grained dynamics with time steps about 200 times larger than those used in the original molecular dynamics simulations.},
  file = {/Users/langquanjun/Zotero/storage/B4B7DYCZ/Jung et al. - 2017 - Iterative Reconstruction of Memory Kernels.pdf}
}

@article{kerrwinterDeepLearningApproach2023,
  title = {A Deep Learning Approach to the Measurement of Long-Lived Memory Kernels from Generalized {{Langevin}} Dynamics},
  author = {Kerr Winter, Max and Pihlajamaa, Ilian and Debets, Vincent E. and Janssen, Liesbeth M. C.},
  date = {2023-06-28},
  journaltitle = {The Journal of Chemical Physics},
  shortjournal = {J Chem Phys},
  volume = {158},
  number = {24},
  eprint = {37366311},
  eprinttype = {pmid},
  pages = {244115},
  issn = {1089-7690},
  doi = {10.1063/5.0149764},
  abstract = {Memory effects are ubiquitous in a wide variety of complex physical phenomena, ranging from glassy dynamics and metamaterials to climate models. The Generalized Langevin Equation (GLE) provides a rigorous way to describe memory effects via the so-called memory kernel in an integro-differential equation. However, the memory kernel is often unknown, and accurately predicting or measuring it via, e.g., a numerical inverse Laplace transform remains a herculean task. Here, we describe a novel method using deep neural networks (DNNs) to measure memory kernels from dynamical data. As a proof-of-principle, we focus on the notoriously long-lived memory effects of glass-forming systems, which have proved a major challenge to existing methods. In particular, we learn the operator mapping dynamics to memory kernels from a training set generated with the Mode-Coupling Theory (MCT) of hard spheres. Our DNNs are remarkably robust against noise, in contrast to conventional techniques. Furthermore, we demonstrate that a network trained on data generated from analytic theory (hard-sphere MCT) generalizes well to data from simulations of a different system (Brownian Weeks-Chandler-Andersen particles). Finally, we train a network on a set of phenomenological kernels and demonstrate its effectiveness in generalizing to both unseen phenomenological examples and supercooled hard-sphere MCT data. We provide a general pipeline, KernelLearner, for training networks to extract memory kernels from any non-Markovian system described by a GLE. The success of our DNN method applied to noisy glassy systems suggests that deep learning can play an important role in the study of dynamical systems with memory.},
  langid = {english},
  keywords = {/unread},
  file = {/Users/langquanjun/Zotero/storage/5XDWBLBH/Kerr Winter et al. - 2023 - A deep learning approach to the measurement of lon.pdf}
}

@article{kuhlmanReviewInverseLaplace2013,
  title = {Review of Inverse {{Laplace}} Transform Algorithms for {{Laplace-space}} Numerical Approaches},
  author = {Kuhlman, Kristopher L.},
  date = {2013-06},
  journaltitle = {Numerical Algorithms},
  shortjournal = {Numer Algor},
  volume = {63},
  number = {2},
  pages = {339--355},
  issn = {1017-1398, 1572-9265},
  doi = {10.1007/s11075-012-9625-3},
  url = {http://link.springer.com/10.1007/s11075-012-9625-3},
  urldate = {2023-12-07},
  abstract = {A boundary element method (BEM) simulation is used to compare the efficiency of numerical inverse Laplace transform strategies, considering general requirements of Laplace-space numerical approaches. The twodimensional BEM solution is used to solve the Laplace-transformed diffusion equation, producing a time-domain solution after a numerical Laplace transform inversion. Motivated by the needs of numerical methods posed in Laplace-transformed space, we compare five inverse Laplace transform algorithms and discuss implementation techniques to minimize the number of Laplace-space function evaluations. We investigate the ability to calculate a sequence of time domain values using the fewest Laplace-space model evaluations. We find Fourier-series based inversion algorithms work for common time behaviors, are the most robust with respect to free parameters, and allow for straightforward image function evaluation re-use across at least a log cycle of time.},
  langid = {english},
  file = {/Users/langquanjun/Zotero/storage/C4YH55RI/Kuhlman - 2013 - Review of inverse Laplace transform algorithms for.pdf}
}

@article{lammFutureSequentialRegularizationMethods1995,
  title = {Future-{{Sequential Regularization Methods}} for {{III-Posed Volterra Equations}}: {{Applications}} to the {{Inverse Heat Conduction Problem}}},
  shorttitle = {Future-{{Sequential Regularization Methods}} for {{III-Posed Volterra Equations}}},
  author = {Lamm, Patricia K.},
  date = {1995-10-15},
  journaltitle = {Journal of Mathematical Analysis and Applications},
  shortjournal = {Journal of Mathematical Analysis and Applications},
  volume = {195},
  number = {2},
  pages = {469--494},
  issn = {0022-247X},
  doi = {10.1006/jmaa.1995.1368},
  url = {https://www.sciencedirect.com/science/article/pii/S0022247X85713686},
  urldate = {2023-12-06},
  abstract = {We develop a theoretical context in which to study the future-sequential regularization method developed by Beck for the Inverse Heat Conduction Problem. In the process, we generalize Beck′s ideas and view that method as one in a large class of regularization methods in which the solution of an ill-posed first-kind Volterra equation is seen to be the limit of a sequence of solutions of well-posed second-kind Volterra equations, Such techniques are important because standard regularization methods (such as Tikhonov regularization) tend to transform a naturally sequential Volterra problem into a full-domain Fredholm problem, destroying the underlying causal nature of the Volterra model and leading to inefficient global approximation strategies. In contrast, the ideas we present here preserve the original Volterra structure of the problem and thus can lead to easily implemented localized approximation strategies. Theoretical properties of these methods are discussed and proofs of convergence are given.},
  file = {/Users/langquanjun/Zotero/storage/QJA73C3R/Lamm - 1995 - Future-Sequential Regularization Methods for III-P.pdf}
}

@article{lammNumericalSolutionFirstKind1997,
  title = {Numerical {{Solution}} of {{First-Kind Volterra Equations}} by {{Sequential Tikhonov Regularization}}},
  author = {Lamm, Patricia K. and Eldén, Lars},
  date = {1997-08},
  journaltitle = {SIAM Journal on Numerical Analysis},
  shortjournal = {SIAM J. Numer. Anal.},
  volume = {34},
  number = {4},
  pages = {1432--1450},
  publisher = {{Society for Industrial and Applied Mathematics}},
  issn = {0036-1429},
  doi = {10.1137/S003614299528081X},
  url = {https://epubs.siam.org/doi/10.1137/S003614299528081X},
  urldate = {2023-12-06},
  abstract = {We examine a new discrete method for regularizing ill-posed Volterra problems. Unlike many classical regularization techniques (such as Tikhonov regularization), this method preserves the Volterra (causal) nature of the problem allowing the regularized solution to be produced sequentially in near real time as data arrives. We analyze the method and, for an important class of Volterra problems, prove that regularized solutions converge to the true solution at the best possible rate with respect to error in the data.In fact, the future polynomial regularization method discussed here may be applied to quite general operator equations provided that the operator may be discretized by a lower-triangular matrix of Toeplitz type. This enlarges the class of operator equations that may be approximated using the method, but it also introduces degenerate situations in which the future polynomial method is no more regularizing than an ordinary discretization method. We characterize these degenerate cases and argue that we are unlikely to see them for the problems of interest here. In particular, such degeneracies cannot occur for the class of Volterra problems for which we are able to prove the future polynomial method converges.Finally we present numerical evidence that this method works well in the recovery of sharp and discontinuous features in the true solution, features that can be oversmoothed by classical regularization techniques.},
  file = {/Users/langquanjun/Zotero/storage/QYBVJJFS/Lamm and Eldén - 1997 - Numerical Solution of First-Kind Volterra Equation.pdf}
}

@article{lammRegularizedInversionFinitely1997,
  title = {Regularized Inversion of Finitely Smoothing {{Volterra}} Operators: Predictor - Corrector Regularization Methods},
  shorttitle = {Regularized Inversion of Finitely Smoothing {{Volterra}} Operators},
  author = {Lamm, Patricia K.},
  date = {1997-04},
  journaltitle = {Inverse Problems},
  shortjournal = {Inverse Problems},
  volume = {13},
  number = {2},
  pages = {375},
  issn = {0266-5611},
  doi = {10.1088/0266-5611/13/2/012},
  url = {https://dx.doi.org/10.1088/0266-5611/13/2/012},
  urldate = {2023-12-06},
  abstract = {We present a `predictor - corrector' type of regularization method for inverse problems modelled by first-kind Volterra integral equations and extend the convergence/regularization theory developed previously to the case where the integral kernel satisfies general -smoothing conditions. The theoretical basis for this method comes from replacing the original first-kind equation by a related second-kind equation which is constructed using `future values' of the original kernel and the data on a small interval of length . In practical implementation this method takes the form of a sequential regularization scheme in which one first predicts a rigid (regularized) solution over a small interval and then, before moving forward in the sequential process, one makes a correction of the solution in order to avoid over-regularization and to improve accuracy. In addition to the convergence theory developed for noise-free data, we show how selection of the regularization parameter as a function of the level of error present in the data serves to facilitate convergence in the case of noisy data. Finally, to further examine the extent to which improves stability, we show how an increase in serves to decrease the condition number of the matrices associated with a discretization of the original problem.},
  langid = {english},
  file = {/Users/langquanjun/Zotero/storage/QHZJD4TH/Lamm - 1997 - Regularized inversion of finitely smoothing Volter.pdf}
}

@incollection{lammSurveyRegularizationMethods2000,
  title = {A {{Survey}} of {{Regularization Methods}} for {{First-Kind Volterra Equations}}},
  booktitle = {Surveys on {{Solution Methods}} for {{Inverse Problems}}},
  author = {Lamm, Patricia K.},
  editor = {Colton, David and Engl, Heinz W. and Louis, Alfred K. and McLaughlin, Joyce R. and Rundell, William},
  date = {2000},
  pages = {53--82},
  publisher = {{Springer}},
  location = {{Vienna}},
  doi = {10.1007/978-3-7091-6296-5_4},
  url = {https://doi.org/10.1007/978-3-7091-6296-5_4},
  urldate = {2023-12-06},
  abstract = {We survey continuous and discrete regularization methods for first-kind Volterra problems with continuous kernels. Classical regularization methods tend to destroy the non-anticipatory (or causal) nature of the original Volterra problem because such methods typically rely on computation of the Volterra adjoint operator, an anticipatory operator. In this survey we pay special attention to particular regularization methods, both classical and nontraditional, which tend to retain the Volterra structure of the original problem. Our attention will primarily be focused on linear problems, although extensions of methods to nonlinear and integro-operator Volterra equations are mentioned when known.},
  isbn = {978-3-7091-6296-5},
  langid = {english},
  keywords = {Local Regularization,Regularization Method,Tikhonov Regularization,Volterra Equation,Volterra Integral Equation},
  file = {/Users/langquanjun/Zotero/storage/9TIZYPXG/Lamm - 2000 - A Survey of Regularization Methods for First-Kind .pdf}
}

@article{leiDatadrivenParameterizationGeneralized2016,
  title = {Data-Driven Parameterization of the Generalized {{Langevin}} Equation},
  author = {Lei, Huan and Baker, Nathan A. and Li, Xiantao},
  date = {2016-12-13},
  journaltitle = {Proceedings of the National Academy of Sciences},
  volume = {113},
  number = {50},
  pages = {14183--14188},
  publisher = {{Proceedings of the National Academy of Sciences}},
  doi = {10.1073/pnas.1609587113},
  url = {https://www.pnas.org/doi/10.1073/pnas.1609587113},
  urldate = {2023-10-31},
  abstract = {We present a data-driven approach to determine the memory kernel and random noise in generalized Langevin equations. To facilitate practical implementations, we parameterize the kernel function in the Laplace domain by a rational function, with coefficients directly linked to the equilibrium statistics of the coarse-grain variables. We show that such an approximation can be constructed to arbitrarily high order and the resulting generalized Langevin dynamics can be embedded in an extended stochastic model without explicit memory. We demonstrate how to introduce the stochastic noise so that the second fluctuation-dissipation theorem is exactly satisfied. Results from several numerical tests are presented to demonstrate the effectiveness of the proposed method.},
  keywords = {Main},
  file = {/Users/langquanjun/Zotero/storage/4BWMPLJR/Lei et al. - 2016 - Data-driven parameterization of the generalized La.pdf}
}

@article{liCoarsegrainedMolecularDynamics2010,
  title = {A Coarse-Grained Molecular Dynamics Model for Crystalline Solids},
  author = {Li, Xiantao},
  date = {2010},
  journaltitle = {International Journal for Numerical Methods in Engineering},
  volume = {83},
  number = {8-9},
  pages = {986--997},
  issn = {1097-0207},
  doi = {10.1002/nme.2892},
  url = {https://onlinelibrary.wiley.com/doi/abs/10.1002/nme.2892},
  urldate = {2024-02-02},
  abstract = {A general mathematical framework for coarse-graining molecular dynamics (MD) model for solid system is presented. The formulation is based directly on the full MD model. The reduction of the atomic degrees of freedom is accomplished using the Mori–Zwanzig projection method. We also demonstrate how to simplify the model under this framework to make the numerical implementation much easier. Copyright © 2010 John Wiley \& Sons, Ltd.},
  langid = {english},
  keywords = {/unread,coarse-graining,molecular dynamics},
  file = {/Users/langquanjun/Zotero/storage/6VGDZM8M/Li - 2010 - A coarse-grained molecular dynamics model for crys.pdf}
}

@article{luSemiclassicalGeneralizedLangevin2019,
  title = {Semi-Classical Generalized {{Langevin}} Equation for Equilibrium and Nonequilibrium Molecular Dynamics Simulation},
  author = {Lü, Jing-Tao and Hu, Bing-Zhong and Hedegård, Per and Brandbyge, Mads},
  date = {2019-02-01},
  journaltitle = {Progress in Surface Science},
  shortjournal = {Progress in Surface Science},
  volume = {94},
  number = {1},
  pages = {21--40},
  issn = {0079-6816},
  doi = {10.1016/j.progsurf.2018.07.002},
  url = {https://www.sciencedirect.com/science/article/pii/S0079681618300200},
  urldate = {2024-01-29},
  abstract = {Molecular dynamics (MD) simulation based on Langevin equation has been widely used in the study of structural, thermal properties of matter in different phases. Normally, the atomic dynamics are described by classical equations of motion and the effect of the environment is taken into account through the fluctuating and frictional forces. Generally, the nuclear quantum effects and their coupling to other degrees of freedom are difficult to include in an efficient way. This could be a serious limitation on its application to the study of dynamical properties of materials made from light elements, in the presence of external driving electrical or thermal fields. One example of such system is single molecule dynamics on metal surface, an important system that has received intense study in surface science. In this review, we summarize recent effort in extending the Langevin MD to include nuclear quantum effect and their coupling to flowing electrical current. We discuss its applications in the study of adsorbate dynamics on metal surface, current-induced dynamics in molecular junctions, and quantum thermal transport between different reservoirs.},
  keywords = {/unread,Current-induced dynamics,Molecular dynamics,Nuclear quantum effect,Semi-classical generalized Langevin equation,Thermal transport},
  file = {/Users/langquanjun/Zotero/storage/QF45NUGG/Lü et al. - 2019 - Semi-classical generalized Langevin equation for e.pdf}
}

@article{moriTransportCollectiveMotion1965,
  title = {Transport, {{Collective Motion}}, and {{Brownian Motion}}*)},
  author = {Mori, Hazime},
  date = {1965-03-01},
  journaltitle = {Progress of Theoretical Physics},
  shortjournal = {Progress of Theoretical Physics},
  volume = {33},
  number = {3},
  pages = {423--455},
  issn = {0033-068X},
  doi = {10.1143/PTP.33.423},
  url = {https://doi.org/10.1143/PTP.33.423},
  urldate = {2024-01-29},
  abstract = {A theory of many-particle systems is developed to formulate transport, collective motion, and Brownian motion from a unified, statistical-mechanical point of view. This is done by, first, rewriting the equation of motion in a generalized form of the Langevin equation in the stochastic theory of Brownian motion and then, either studying the average evolution of a non-equilibrium system or calculating the linear response function to a mechanical perturbation. (1) An expression is obtained for the damping function φ(t), the real part of whose Laplace transform gives the damping constnat of collective motion. (2) A general equation of motion for a set of dynamical variables At) is derived, which takes the form  where  is a frequency matrix determining the collective oscillation of A(t). The quantity f(t) consists of those terms which are either non-linear in A(s), t ≧s ≧0, or dependent on the other degrees-of-freedom explicitly, and its time-correlation function is connected with the damping function φ(t) by (f(t1), f(t2)*) = φ(t1 − t2)·(A, A*). (3) An expression is obtained for the linear after-effect function to thermal disturbances such as temperature gradient and strain tensor. Both the conjugate fluxes and the time dependence differ from those of the mechanical response function. The conjugate fluxes are random parts of the fluxes of the state variables, thus depending on temperature. (4) The difference in the time dependence arises from a special property of the time evolution of f(t) and ensures that the damping function and the thermal after-effect function are determined by the microscopic processes in strong contrast to the mechanical response function. The difficulty of the plateau value problem in the previous theories of Brownian motion and transport coefficients is thus removed. (5) The theory is illustrated by dealing with the motion of inhomogeneous magnetization in ferromagnets and the Brownian motion of the collective coordinates of fluids. (6) Explicit expressions are derived for the thermal after-effect functions and the transport coefficients of multi-component systems.},
  keywords = {/unread},
  file = {/Users/langquanjun/Zotero/storage/FSVF6ZPD/Mori - 1965 - Transport, Collective Motion, and Brownian Motion.pdf}
}

@article{russoMachineLearningMemory2022,
  title = {Machine {{Learning Memory Kernels}} as {{Closure}} for {{Non-Markovian Stochastic Processes}}},
  author = {Russo, Antonio and Duran-Olivencia, Miguel A. and Kevrekidis, Ioannis G. and Kalliadasis, Serafim},
  date = {2022},
  journaltitle = {IEEE Transactions on Neural Networks and Learning Systems},
  shortjournal = {IEEE Trans. Neural Netw. Learning Syst.},
  pages = {1--13},
  issn = {2162-237X, 2162-2388},
  doi = {10.1109/TNNLS.2022.3210695},
  url = {https://ieeexplore.ieee.org/document/9947343/},
  urldate = {2024-02-02},
  abstract = {Finding the dynamical law of observable quantities lies at the core of physics. Within the particular field of statistical mechanics, the generalized Langevin equation (GLE) comprises a general model for the evolution of observables covering a great deal of physical systems with many degrees of freedom and an inherently stochastic nature. Although formally exact, GLE brings its own great challenges. It depends on the complete history of the observables under scrutiny, as well as the microscopic degrees of freedom, all of which are often inaccessible. We show that these drawbacks can be overcome by adopting elements of machine learning from empirical data, in particular coupling a multilayer perceptron (MLP) with the formal structure of GLE and calibrating the MLP with the data. This yields a powerful computational tool capable of describing noisy complex systems beyond the realms of statistical mechanics. It is exemplified with a number of representative examples from different fields: from a single colloidal particle and particle chains in a thermal bath to climatology and finance, showing in all cases excellent agreement with the actual observable dynamics. The new framework offers an alternative perspective for the study of nonequilibrium processes opening also a new route for stochastic modeling.},
  langid = {english},
  file = {/Users/langquanjun/Zotero/storage/JQDIIX3A/Russo et al. - 2022 - Machine Learning Memory Kernels as Closure for Non.pdf}
}

@article{shinozukaSimulationStochasticProcesses1991,
  title = {Simulation of {{Stochastic Processes}} by {{Spectral Representation}}},
  author = {Shinozuka, Masanobu and Deodatis, George},
  date = {1991-04-01},
  journaltitle = {Applied Mechanics Reviews},
  shortjournal = {Applied Mechanics Reviews},
  volume = {44},
  number = {4},
  pages = {191--204},
  issn = {0003-6900},
  doi = {10.1115/1.3119501},
  url = {https://doi.org/10.1115/1.3119501},
  urldate = {2023-11-16},
  abstract = {The subject of this paper is the simulation of one-dimensional, uni-variate, stationary, Gaussian stochastic processes using the spectral representation method. Following this methodology, sample functions of the stochastic process can be generated with great computational efficiency using a cosine series formula. These sample functions accurately reflect the prescribed probabilistic characteristics of the stochastic process when the number N of the terms in the cosine series is large. The ensemble-averaged power spectral density or autocorrelation function approaches the corresponding target function as the sample size increases. In addition, the generated sample functions possess ergodic characteristics in the sense that the temporally-averaged mean value and the autocorrelation function are identical with the corresponding targets, when the averaging takes place over the fundamental period of the cosine series. The most important property of the simulated stochastic process is that it is asymptotically Gaussian as N → ∞. Another attractive feature of the method is that the cosine series formula can be numerically computed efficiently using the Fast Fourier Transform technique. The main area of application of this method is the Monte Carlo solution of stochastic problems in engineering mechanics and structural engineering. Specifically, the method has been applied to problems involving random loading (random vibration theory) and random material and geometric properties (response variability due to system stochasticity).},
  file = {/Users/langquanjun/Zotero/storage/VRCFXU6I/Shinozuka and Deodatis - 1991 - Simulation of Stochastic Processes by Spectral Rep.pdf;/Users/langquanjun/Zotero/storage/6NLIWK97/Simulation-of-Stochastic-Processes-by-Spectral.html}
}

@article{wilkinsonEvaluationZerosIllconditioned1959,
  title = {The Evaluation of the Zeros of Ill-Conditioned Polynomials. {{Part I}}},
  author = {Wilkinson, James H.},
  date = {1959-12-01},
  journaltitle = {Numerische Mathematik},
  shortjournal = {Numer. Math.},
  volume = {1},
  number = {1},
  pages = {150--166},
  issn = {0945-3245},
  doi = {10.1007/BF01386381},
  url = {https://doi.org/10.1007/BF01386381},
  urldate = {2023-12-12},
  langid = {english},
  keywords = {Mathematical Method},
  file = {/Users/langquanjun/Zotero/storage/4BA5S439/Wilkinson - 1959 - The evaluation of the zeros of ill-conditioned pol.pdf}
}

@article{zwanzigMemoryEffectsIrreversible1961,
  title = {Memory {{Effects}} in {{Irreversible Thermodynamics}}},
  author = {Zwanzig, Robert},
  date = {1961-11-15},
  journaltitle = {Physical Review},
  shortjournal = {Phys. Rev.},
  volume = {124},
  number = {4},
  pages = {983--992},
  publisher = {{American Physical Society}},
  doi = {10.1103/PhysRev.124.983},
  url = {https://link.aps.org/doi/10.1103/PhysRev.124.983},
  urldate = {2024-01-29},
  abstract = {A new generalization of Onsager's theory of irreversible processes is presented. The main purpose is to allow for memory effects or causal time behavior, so that the response to a thermodynamic force comes later than the application of the force. This is accomplished by a statistical mechanical derivation of an exact non-Markoffian kinetic equation for the probability distribution in the space of macroscopic state variables. The memory effect in the resulting transport equations is represented by a time convolution of the thermodynamic forces with memory functions. The latter are time-correlation functions in the rates of change of the phase functions corresponding to macroscopic quantities. The resulting transport equations are not restricted to small deviations from thermal equilibrium. Onsager's theory is shown to be the low-frequency limit of our causal theory.},
  keywords = {/unread},
  file = {/Users/langquanjun/Zotero/storage/VX5FJPWA/Zwanzig - 1961 - Memory Effects in Irreversible Thermodynamics.pdf;/Users/langquanjun/Zotero/storage/AKXRM29V/PhysRev.124.html}
}

@article{linDatadrivenModelReduction2021,
  title = {Data-Driven Model Reduction, {{Wiener}} Projections, and the {{Koopman-Mori-Zwanzig}} Formalism},
  author = {Lin, Kevin K. and Lu, Fei},
  date = {2021-01-01},
  journaltitle = {Journal of Computational Physics},
  shortjournal = {Journal of Computational Physics},
  volume = {424},
  pages = {109864},
  issn = {0021-9991},
  doi = {10.1016/j.jcp.2020.109864},
  url = {https://www.sciencedirect.com/science/article/pii/S0021999120306380},
  urldate = {2024-03-26},
  abstract = {Model reduction methods aim to describe complex dynamic phenomena using only relevant dynamical variables, decreasing computational cost, and potentially highlighting key dynamical mechanisms. In the absence of special dynamical features such as scale separation or symmetries, the time evolution of these variables typically exhibits memory effects. Recent work has found a variety of data-driven model reduction methods to be effective for representing such non-Markovian dynamics, but their scope and dynamical underpinning remain incompletely understood. Here, we study data-driven model reduction from a dynamical systems perspective. For both chaotic and randomly-forced systems, we show the problem can be naturally formulated within the framework of Koopman operators and the Mori-Zwanzig projection operator formalism. We give a heuristic derivation of a NARMAX (Nonlinear Auto-Regressive Moving Average with eXogenous input) model from an underlying dynamical model. The derivation is based on a simple construction we call Wiener projection, which links Mori-Zwanzig theory to both NARMAX and to classical Wiener filtering. We apply these ideas to the Kuramoto-Sivashinsky model of spatiotemporal chaos and a viscous Burgers equation with stochastic forcing.},
  keywords = {Koopman operators,Model reduction,Mori-Zwanzig formalism,new paper to cite,Nonlinear time series analysis,System identification},
  file = {/Users/langquanjun/Zotero/storage/33PDLFYL/Lin and Lu - 2021 - Data-driven model reduction, Wiener projections, a.pdf}
}

@article{luComparisonContinuousDiscretetime2016,
  title = {Comparison of Continuous and Discrete-Time Data-Based Modeling for Hypoelliptic Systems},
  author = {Lu, Fei and Lin, Kevin and Chorin, Alexandre},
  date = {2016-12-20},
  journaltitle = {Communications in Applied Mathematics and Computational Science},
  volume = {11},
  number = {2},
  pages = {187--216},
  publisher = {Mathematical Sciences Publishers},
  issn = {2157-5452},
  doi = {10.2140/camcos.2016.11.187},
  url = {https://msp.org/camcos/2016/11-2/p03.xhtml},
  urldate = {2024-03-26},
  keywords = {new paper to cite},
  file = {/Users/langquanjun/Zotero/storage/2NQVVS2L/Lu et al. - 2016 - Comparison of continuous and discrete-time data-ba.pdf}
}

@online{xieInitioGeneralizedLangevin2024,
  title = {Ab {{Initio Generalized Langevin Equation}}},
  author = {Xie, Pinchen and Car, Roberto and E, Weinan},
  date = {2024-02-15},
  eprint = {2211.06558},
  eprinttype = {arxiv},
  eprintclass = {cond-mat, physics:physics},
  doi = {10.48550/arXiv.2211.06558},
  url = {http://arxiv.org/abs/2211.06558},
  urldate = {2024-03-26},
  abstract = {We introduce a machine learning-based approach called ab initio generalized Langevin equation (AIGLE) to model the dynamics of slow collective variables in materials and molecules. In this scheme, the parameters are learned from atomistic simulations based on ab initio quantum mechanical models. Force field, memory kernel, and noise generator are constructed in the context of the Mori-Zwanzig formalism, under the constraint of the fluctuation-dissipation theorem. Combined with deep potential molecular dynamics and electronic density functional theory, this approach opens the way to multi-scale modeling in a variety of situations. Here, we demonstrate this capability with a study of two mesoscale processes in crystalline lead titanate, namely the field-driven dynamics of a planar ferroelectric domain wall, and the dynamics of an extensive lattice of coarse-grained electric dipoles. In the first case, AIGLE extends the reach of ab initio simulations to a regime of noise-driven motions not accessible to molecular dynamics. In the second case, AIGLE deals with an extensive set of collective variables by adopting a local approximation for the memory kernel and retaining only short-range noise correlations. The scheme is computationally more efficient than molecular dynamics by several orders of magnitude, and mimics the microscopic dynamics at low frequencies where it reproduces accurately the dominant far-infrared absorption frequency.},
  pubstate = {preprint},
  keywords = {Condensed Matter - Materials Science,Condensed Matter - Mesoscale and Nanoscale Physics,new paper to cite,Physics - Computational Physics},
  file = {/Users/langquanjun/Zotero/storage/C5WTJT4D/Xie et al. - 2024 - Ab Initio Generalized Langevin Equation.pdf;/Users/langquanjun/Zotero/storage/85FPNAYD/2211.html}
}

@inproceedings{luDataAdaptiveRKHS2022,
  title = {Data Adaptive {{RKHS Tikhonov}} Regularization for Learning Kernels in Operators},
  booktitle = {Proceedings of {{Mathematical}} and {{Scientific Machine Learning}}},
  author = {Lu, Fei and Lang, Quanjun and An, Qingci},
  date = {2022-09-14},
  pages = {158--172},
  publisher = {{PMLR}},
  issn = {2640-3498},
  url = {https://proceedings.mlr.press/v190/lu22a.html},
  urldate = {2024-01-02},
  abstract = {We present DARTR: a Data Adaptive RKHS Tikhonov Regularization method for the linear inverse problem of nonparametric learning of function parameters in operators. A key ingredient is a system intrinsic data adaptive (SIDA) RKHS, whose norm restricts the learning to take place in the function space of identifiability. DARTR utilizes this norm and selects the regularization parameter by the L-curve method. We illustrate its performance in examples including integral operators, nonlinear operators and nonlocal operators with discrete synthetic data. Numerical results show that DARTR leads to an accurate estimator robust to both numerical error due to discrete data and noise in data, and the estimator converges at a consistent rate as the data mesh refines under different levels of noises, outperforming two baseline regularizers using 𝑙2l2l\^2 and 𝐿2L2L\^2 norms.},
  eventtitle = {Mathematical and {{Scientific Machine Learning}}},
  langid = {english},
  file = {/Users/langquanjun/Zotero/storage/LPKADP9V/Lu et al. - 2022 - Data adaptive RKHS Tikhonov regularization for lea.pdf}
}

@article{razavy1962analytical,
  title={Analytical solutions for velocity-dependent nuclear potentials},
  author={Razavy, M and Field, G and Levinger, JS},
  journal={Physical Review},
  volume={125},
  number={1},
  pages={269},
  year={1962},
  publisher={APS}
}

@article{pucacco2004integrable,
  title={On integrable Hamiltonians with velocity dependent potentials},
  author={Pucacco, Giuseppe},
  journal={Celestial Mechanics and Dynamical Astronomy},
  volume={90},
  pages={109--123},
  year={2004},
  publisher={Springer}
}

@article{de2017spin,
  title={Spin-and velocity-dependent nonrelativistic potentials in modified electrodynamics},
  author={de Brito, GP and Malta, PC and Ospedal, LPR},
  journal={Physical Review D},
  volume={95},
  number={1},
  pages={016006},
  year={2017},
  publisher={APS}
}

@article{tabatabai2013novel,
  title={Novel applications of laser Doppler vibration measurements to medical imaging},
  author={Tabatabai, Habib and Oliver, David E and Rohrbaugh, John W and Papadopoulos, Christopher},
  journal={Sensing and Imaging: An International Journal},
  volume={14},
  pages={13--28},
  year={2013},
  publisher={Springer}
}

@article{atlas1973doppler,
  title={Doppler radar characteristics of precipitation at vertical incidence},
  author={Atlas, David and Srivastava, RC and Sekhon, Rajinder S},
  journal={Reviews of Geophysics},
  volume={11},
  number={1},
  pages={1--35},
  year={1973},
  publisher={Wiley Online Library}
}

\end{document}